\definecolor{mycitecolor}{RGB}{71, 191, 38}
\definecolor{mylinkcolor}{RGB}{40, 115, 201}
\newtheorem{theorem}{\bf Theorem}
\newtheorem{definition}{\bf Definition}
\newtheorem{lemma}{\bf Lemma}
\newtheorem{remark}{\bf Remark}
\newtheorem{assumption}{Assumption}
\begin{document}

\title{Strategic Decision-Making in Multi-Agent Domains: A Weighted Constrained Potential Dynamic Game Approach}

\author{Maulik Bhatt$^{1}$, Yixuan Jia$^{2}$ and Negar Mehr$^{3}$
\thanks{$^1$Maulik Bhatt ({\tt\small maulikbhatt@berkeley.edu }) and $^{2}$Negar Mehr ({\tt\small negar@berkeley.edu }) are with the Department of Mechanical Engineering, University of California at Berkeley, CA, USA. \\
$^3$Yixuan Jia ({\tt\small yixuany@mit.edu }) is with LIDS, Massachusetts Institute of Technology, MA, USA }.
\thanks{This work is supported by the National Science Foundation, under grants ECCS-2438314 CAREER Award, CNS-2423130, and CCF-2423131.}
}

\maketitle

\begin{abstract}
In interactive multi-agent settings, decision-making and planning are challenging mainly due to the agents' interconnected objectives. Dynamic game theory offers a formal framework for analyzing such intricacies. Yet, solving constrained dynamic games and determining the interaction outcome in the form of generalized Nash Equilibria (GNE) pose computational challenges due to the need for solving constrained coupled optimal control problems. In this paper, we address this challenge by proposing to leverage the special structure of many real-world multi-agent interactions. More specifically, our key idea is to leverage constrained dynamic potential games, which are games for which GNE can be found by solving a single constrained optimal control problem associated with minimizing the potential function. We argue that constrained dynamic potential games can effectively facilitate interactive decision-making in many multi-agent interactions. We will identify structures in realistic multi-agent interactive scenarios that can be transformed into weighted constrained potential dynamic games (WCPDGs). We will show that the GNE of the resulting WCPDG can be obtained by solving a single constrained optimal control problem. We will demonstrate the effectiveness of the proposed method through various simulation studies and show that we achieve significant improvements in solve time compared to state-of-the-art game solvers. We further provide experimental validation of our proposed method in a navigation setup involving two quadrotors carrying a rigid object while avoiding collisions with two humans.
\end{abstract}

\begin{IEEEkeywords} Multi-Agent Interactions, Dynamic Games, Potential Games, Nash Equilibria
\end{IEEEkeywords}

\section{Introduction}\label{sec:intro}
\IEEEPARstart{N}{umerous} robotic applications, such as autonomous driving, crowd-robot navigation, and delivery robots, include scenarios that require multi-agent interactions. In these situations, a robot is tasked with engaging with either human individuals or other robots present in the surrounding environment. Making decisions and creating plans in these domains is challenging as agents in multi-agent systems may have different goals or objectives. This requires agents to develop strategies that account for how other agents will react to their actions, i.e., each agent needs to reason about the likely reactions of other agents in their decision-making. Furthermore, agents may need to satisfy some task constraints, such as collision avoidance and goal constraints.
Such reasoning may be computationally demanding, requiring agents to perform joint prediction and planning.

\begin{figure*}[t]
    \centering
    \vspace{0.25cm}
    \includegraphics[scale=0.63]{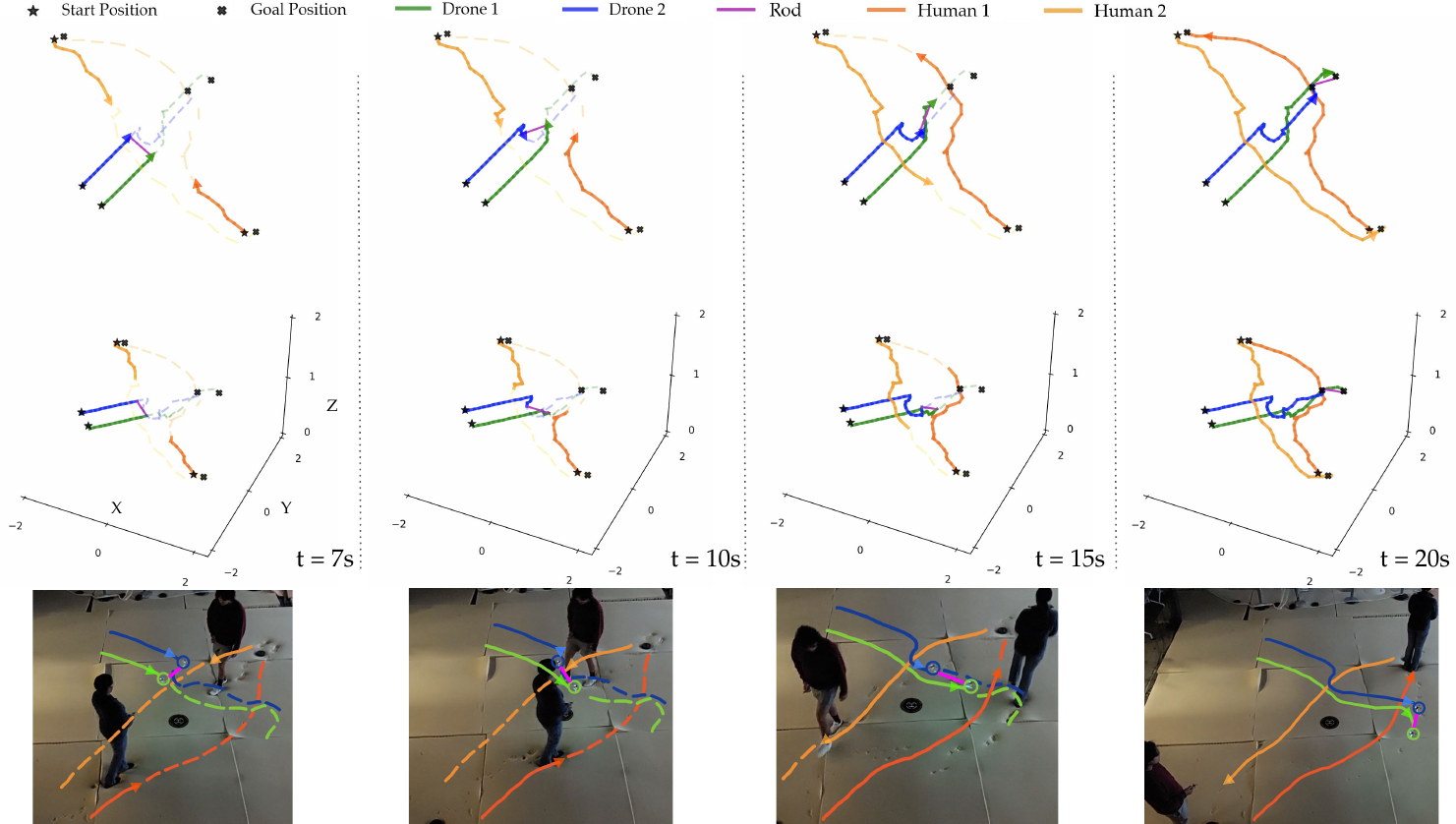}
    \caption{Visualizations and the corresponding snapshots of the hardware experiment. 
    The two quadrotors and two humans need to reach their designated goal positions while avoiding collision with others. The two quadrotors are carrying a rod. In this case, each quadrotor needs to account for the motions of humans as well as the other quadrotor, as they share the constraint imposed by the rod.
    As shown in the figure, with our algorithm, the two quadrotors are able to avoid nearby humans by changing their orientations elegantly while carrying the rod. (Refer to Appendix~\ref{sec: exempt} for the approval details for conducting experiments with human subjects.) {We have also included a supplementary video of the experiment which will be available at \href{http://ieeexplore.ieee.org}{http://ieeexplore.ieee.org}.}}
    \label{fig:hardware_experiment}
    \vspace{-0.55cm}
\end{figure*}

Dynamic Game theory is a powerful tool for analyzing and understanding multi-agent interactions because it provides a formal framework for studying strategic decision-making in situations where an agent's objective may depend not only on its own choices but also on those of the other agents. 
In particular, dynamic noncooperative game theory \cite{bacsar1998dynamic} provides a formalism for sequential decision-making involving multiple strategic decision-makers. It has been shown that Nash equilibria of dynamic games can model the interaction outcome among multiple strategic decision makers \cite{isaacs1999differential}.
Generalized Nash Equilibria (GNE) in dynamic games consists of sets of strategies (formally defined later) for the agents involved in the game such that no agent can benefit more by deviating from it while satisfying the safety constraints. Mathematically, finding the GNE of a constrained dynamic game can be formulated as a set of coupled constrained optimal control problems.
However, real-world robot planning problems have non-linear dynamics, costs, and constraints. Therefore, solving the coupled constrained non-linear optimal control problems for computing Nash equilibria is a highly challenging task. This computational bottleneck inhibits most of the current game theoretic motion planning methods from being used in real-time.

In this paper, we tackle this challenge and provide a framework for tractably finding the equilibria of multi-agent interactions that involve strategic decision-makers. Our key idea is that, by utilizing ideas from \emph{potential games} \cite{monderer1996potential}, we can significantly simplify the problem of computing the GNE of general-sum dynamic games. Potential games are a class of games for which a potential function exists such that the Nash equilibria of the original game can be computed by optimizing the potential function. Potential games often appear in economics and engineering applications, where multiple agents share a common resource (a raw material, a communication link, a transportation link, an electrical transmission line) or limitations. \emph{We argue that in many multi-agent interactions, agents either share a space or a task}. Thus, we propose that potential games, particularly in the form of constrained dynamic potential games, can be effectively utilized to facilitate interactive decision-making. 

Using constrained potential games, we reduce the coupled constrained optimal control problems of multi-agent motion planning into a well-studied single constrained optimal control problem. With this simplification, we can significantly reduce the computational complexity of multi-agent motion planning. Using any off-the-shelf optimizer, we can efficiently solve the resulting single optimal control problem, making our algorithm faster and more employable in real-time. However, previous works that utilized potential games for multi-agent trajectory planning \cite{kavuncu2021potential, bhatt2022efficient} could only capture symmetric inter-agent costs or couplings. These approaches limit the types of cost structures that can be captured for multi-agent interactions.

In this work, we go beyond symmetric couplings and identify various practical cost structures under which multi-agent decision-making and planning turn into weighted constrained potential dynamic games (WCPDGs). We show how the GNE of the underlying dynamic game can be found reliably and tractably for WCPDGs by solving a single constrained optimal control problem. We will show that under a set of mild assumptions, dyadic interactions, i.e., interactions between two agents, are always WCPDGs. We will then consider multi-agent settings with more than two agents and identify a set of realistic cost structures under which interactions become WCPDGs. We will discuss how the resulting single constrained optimal control problem can be solved using off-the-shelf solvers such as ALTRO~\cite{howell2019altro}, do-mpc \cite{LUCIA201751} and iterative-LQR \cite{li2004iterative} for efficient multi-agent trajectory planning with nonlinear state and action constraints. Through several simulation studies, we will show that our proposed method provides an efficient solution for finding equilibria. To demonstrate the practical relevance of our proposed method, we also showcase a hardware experiment. We will further compare the solve time of our method with the state-of-the-art game solvers and show that our method is significantly faster than the state-of-the-art. In summary, the main contributions of this paper are as follows:
\begin{itemize}
    \item We find practical cost structures under which the problem of multi-agent motion planning problem is a WCPDG.
    \item We demonstrate how the GNE of the original game can be computed by optimizing the potential function of the game. This reduces the computational complexity of multi-agent motion planning, making the algorithm scalable, faster, and in real time. 
    \item We show the benefits of our proposed method in a number of simulations, showing efficient and intuitive solutions while outperforming state-of-the-art game solvers in terms of computational speed.
    \item We also validate our method in an experiment setting where two drones carried an object while moving around humans.
\end{itemize}
The organization of this paper is as follows.
Section \ref{sec:relatedwork} provides a literature review on multi-agent motion planning. Notations and problem formulation are provided in Section \ref{sec:notation}. In Section \ref{sec:dynamic-games}, we present important results on WCPDG for game theoretic motion planning. Then, we provide realistic cost structures for dyadic and multi-agent interactions under which the resulting game will be a WCPDG in Sections \ref{sec:dyadic-interactions} and \ref{sec:multi-agent-interactions}, respectively. Simulations and experiments with the analysis of the resulting performance are incorporated in Section \ref{sec:simulations} and \ref{sec:experiment}, respectively. Finally, Section \ref{sec:Conclusion} concludes this paper with a discussion on future directions.

\begin{figure*}[th]
    \centering
    \includegraphics[scale = 0.26]{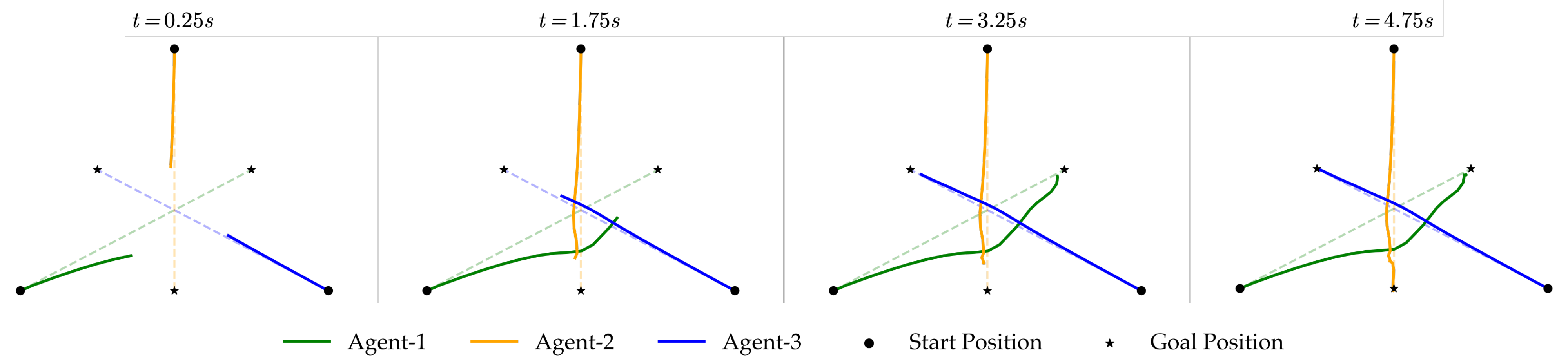}
    \caption{The snapshots of the trajectories found by our algorithm when three unicycle agents interact with each other and exchange their positions diagonally over a time interval of 5s. The cost functions are highly asymmetric, and Agent 1 incurs more costs for getting close to other agents. Agents move from their start positions to their respective goal positions. Dashed lines represent the nominal path for each agent. Because of the asymmetric nature of the interaction, Agent 1 yields to other agents and deviates more from its nominal path to move towards its goal.}
    \label{fig:three_unicycle_potential}
\end{figure*}

\section{Related Work}\label{sec:relatedwork}
\subsection{Multi-agent Motion Planning}
Multi-agent motion planning is generally classified into two categories, cooperative and noncooperative, based on how agents interact with each other in the environment. As the name suggests, cooperative motion planning refers to planning in settings where agents are all part of the same team. Various cooperative motion planning algorithms have been used, such as \cite{yu2016optimal, tang2018hold, augugliaro2012generation, desaraju2012decentralized, toumieh2022decentralized}. Multi-agent motion planning using buffered voronoi cells was introduced in \cite{zhou2017fast}. There have been various learning-based planners for multi-agent motion planning that utilize past trajectory data to improve motion planning in the future. A few representative examples of such methods are \cite{lv2019path, qie2019joint, liu2020mapper, semnani2020multi, vinod2022safe}. However, learning-based methods generally suffer from non-stationarity in the environment because of simultaneous learning~\cite{papoudakis2019dealing,de2020independent}.
Furthermore, learning-based methods generally focus on the path-planning aspect and fail to consider the interactive nature of multi-agent motion planning problems. The work in \cite{kretzschmar2016socially} presents an inverse reinforcement learning-based framework for cooperative crowd navigation scenarios. This work explicitly models the interactions between the robot and humans but does not provide a theoretical guarantee of finding equilibrium solutions that abide safety constraints as well. It should be noted that most of the aforementioned methods are cooperative in nature. However, many real-world motion planning scenarios contain agents with individual goals and objectives. Previous works fail to work in such settings. Many game theoretical approaches exist for such noncooperative motion planning problems, which are discussed in the following section.
\subsection{Game-Theoretic Multi-Agent Interactions}
Game theory has proven effective for multi-agent systems where agents are individual decision-makers \cite{parsons2002game}. Due to the interactive nature of multi-agent planning, where agents have to interact with each other and take into account the likely reactions of other agents, game theory naturally becomes a robust framework for reasoning about such interdependencies. Seeking Nash equilibria of the underlying game in the interactive multi-agent planning domain generates highly intuitive and interactive trajectories. 

A Stackelberg equilibrium was initially considered in two-player games for modeling the mutual coupling between two agents ~\cite{sadigh2016planning}. In the context of autonomous driving, a Markovian Stackelberg strategy was developed using hierarchical planning in~\cite{fisac2019hierarchical}. However, the proposed dynamic programming approach suffers from the curse of dimensionality. Moreover, Stackelberg equilibria are not suited to multi-agent setups. Stackelberg equilibria commonly involve a leader and a follower; hence, they assume a form of asymmetry among the agents in that one agent reacts to another. A human-like motion planner based on game theoretic decision-making was proposed in \cite{turnwald2019human}. However, to compute Nash equilibria, they generated a random set of sample trajectories and chose the best samples using Pareto optimality, which doesn't necessarily recover Nash equilibria. 

Iterative best-response methods were developed in~\cite{spica2020real, wang2019game, wang2019game-2} in the context of autonomous racing. However, such an approach for finding Nash equilibria can be slow because computing the best response for each agent generally involves solving a non-convex optimization problem. Therefore, authors in \cite{miller2022multi} proposed a Lexicograhpic-based method that helps find approximate Nash equilibria using a linear version of the iterated best response. However, this method does not consider hard constraints on states and actions.

Iterative linear quadratic approximations for real-time game theoretic motion planning were introduced in \cite{fridovich2020efficient}. A belief space-based motion planning method was proposed in \cite{schwarting2021stochastic}. Augmented Lagrangian-based game theoretic solver (ALGAMES) was proposed in \cite{le2022algames}. In \cite{laine2023computation}, the authors proposed a sequential linear-quadratic technique game-based technique to compute generalized Nash equilibria. Seeking equilibria of stochastic variants of general-sum dynamic games was considered in~\cite{wang2020game} and~\cite{mehr2023maximum}. A fault-tolerant receding horizon game-theoretic motion planner that leverages inter-agent communication with intention hypothesis likelihood for multiple highly interactive robots was introduced in \cite{chahine2023intention}.
However, these methods are not easily scalable. They struggle to work efficiently in real-time because they attempt to solve complex coupled optimal control problems, which become more difficult as the number of agents increases. Additionally, these methods can face convergence issues as the complexity of the game increases with more agents.


\subsection{Potential Games}
Potential games have been extensively studied and analyzed, particularly for static games~\cite{rosenthal1973class,monderer1996potential}. Due to their simplicity and ease of handling, continuous-time versions of potential games called Potential Differential Games were introduced in \cite{fonseca2018potential}. Discrete-time versions of potential differential games are called potential dynamic games. Potential dynamic games and potential differential games have found practical implementations in diverse applications, including resource allocation, cooperative control, power networks, and communication networks, thanks to their simplicity and tractability~\cite{arslan2007autonomous, marden2009cooperative, zazo2014control, zazo2016dynamic}. 
Dynamic potential games were recently used in interactive multi-agent motion planning \cite{kavuncu2021potential, bhatt2022efficient, williams2023distributed}. Furthermore, dynamic potential games were utilized in autonomous racing in \cite{jia2023rapid}. However, the type of objectives these methods can capture are very restrictive as they require symmetry in the way agents assign costs to each other. 

\section{Notation and Preliminaries}\label{sec:notation}
We consider the problem of multi-agent trajectory planning in a non-cooperative setting. We model agents' interactions as a constrained dynamic game. Let $\mathcal{N} = \{1,2,\ldots, N\}$ be the set of agents' indices, where $N$ is the total number of agents. The state space of each agent $i \in \mathcal{N}$ is denoted by $\mathcal{X}^i \in \mathbb{R}^{n_i} $ where $n_i$ is the dimension of the state space of agent $i$. Let $\mathcal{X} = \mathcal{X}^1\times\cdots\times\mathcal{X}^N \in \mathbb{R}^n$ denote the set of states of all agents where $n = \sum_{i\in\mathcal{N}}n_i$ is the dimension of the state space of the overall system. We assume that the game is played over a finite horizon of time $T$. Let $x_k = \left(x^{1^\top}_k,\ldots,x^{N^\top}_k\right)^\top$ denote the state vector of the game at time $k \in \{0,1,\ldots,T\}$ with $x^i_k \in \mathcal{X}^i$ being the state vector of agent $i$ at time $k$. Furthermore, let $x^i=\{x^i_k\}_{k\in\{0,1,\ldots,T\}}$ be the collection of states of agent $i$ at all times. Likewise, let $x=\{x_k\}_{k\in\{0,1,\ldots, T\}}$ be the collection of the overall system state vector at all times.

For each agent, $i \in \mathcal{N}$, let $\mathcal{U}^i \subseteq \mathbb{R}^{m_i}$ denote the action space of agent $i$ where $m_i$ is the dimension of its action space.
The action space of the overall game is denoted by $\mathcal{U} = \mathcal{U}^1\times\cdots\times\mathcal{U}^N \subseteq \mathbb{R}^{m}$ where $m = \sum_{i\in\mathcal{N}}m_i$. The vector of agents' actions at time $k$ is denoted by $ u_k := \left(u^{1^\top}_k,\ldots,u^{N^\top}_k\right)^\top \in \mathcal{U}$ where $u^i_k \in \mathcal{U}^i_k$ is the vector of agent $i$'s actions at time $k$. Similar to before, we define $u^i=\{u^i_k\}_{k\in\{0,1,\ldots, T-1\}}$ and $u=\{u_k\}_{k\in\{0,1,\ldots, T-1\}}$ to be the collection of action vectors for agent $i$ and the overall system respectively over the entire horizon of time. Furthermore, for every agent $i \in \mathcal{N}$, we define $u^{-i}_k := \left(u^{1^\top}_k,\ldots,u^{{i-1}^\top}_k,u^{{i+1}^\top}_k,\ldots,u^{N^\top}_k\right)^\top \in \mathcal{U}^{-i}$ as the vector of all agents' actions at time $k$ except agent $i$, where $\mathcal{U}^{-i} := \Pi_{j\neq i}\mathcal{U}^j$. We also have $u^{-i}=\{u^{-i}_k\}_{k\in\{0,1,\ldots,T-1\}}$. With a slight abuse of notation, we can write $u_k = \left(u^{i^\top}_k,u^{{-i}^\top}_k\right)^\top$. 

We assume that each agent's dynamics depend only on its own states and actions via:
\begin{equation}
    x^i_{k+1}=f^i(x^i_k,u^i_k).
\end{equation}
Note that this is a valid assumption as most real-world robotic interactions contain robots that have separable dynamics, and the couplings among the agents normally arise from the couplings inherent in agents' objectives. The overall system dynamics for the game are defined by $f: \mathcal{X}\times\mathcal{U}\rightarrow \mathcal{X}$,
\begin{equation}
    x_{k+1} = f(x_k,u_k),\label{sys_dynamics}
\end{equation}where $f = \left(f^{1^\top},\ldots,f^{N^\top}\right)^\top$. It should be noted that we have assumed time-invariant dynamics for notational simplicity, but this formulation can easily be extended to time-varying dynamics as well.

In order to account for constraints, we assume that there is an inequality constraint function $g:\mathcal{X}\times\mathcal{U}\rightarrow \mathbb{R}^c$ where $c$ is the number of constraints. The inequality constraints are defined as
\begin{equation}\label{constraints}
    g(x_k,u_k) \leq 0,
\end{equation}
where the inequality is interpreted elementwise. For the terminal time-step, the inequality constrained is only a function of the terminal state, $g(x_T)\leq0$.
It is to be noted that any equality constraint can also be included as a combination of two inequality constraints. Also, we assume that constraints that capture coupling between agents are shared among the agents. Furthermore, it should be noted that we have assumed time-invariant constraints for notional simplicity, but this formulation can easily be extended to time-varying constraints as well. Let $\mathcal{C}_k$ at each time step $k$ be the set of all the feasible states $x_k$ and actions $u_k$. We can define $\mathcal{C}_0 = \mathcal{X}\times\mathcal{U} \cap \{(x_0,u_0): g(x_0,u_0)\leq 0\}$ and $\mathcal{C}_k = \{\{\mathcal{X} \cap \{x_k \mid x_k = f(x_{k-1},u_{k-1})\} \}\times\mathcal{U}\}\cap \{(x_k,u_k): g(x_k,u_k)\leq 0\}$ for $k\in\{1,\ldots,T-1\}$. Furthermore, at the terminal time step $T$, no action is being taken, and hence, the constraint function will only be a function of states, and the corresponding constraint set will be $\mathcal{C}_{T} = \mathcal{X} \cap \{x_{T}: g(x_{T}) \leq 0 \}$.

We assume that each agent $i$ seeks to minimize a cost function $J^i:\mathcal{X}\times\mathcal{U}^i \rightarrow \mathbb{R}$ that can depend on the full state of the system and the agent's actions itself 
\begin{equation}
    J^i(x,u) = L^i_T(x_T) + \sum_{k=0}^{T-1}L^i_k(x_k,u^i_k),
\end{equation}
where $L^i_T(\cdot)$ and $L^i_k(\cdot)$ are the terminal and running costs of agent $i$ respectively. The decomposition of cost into running and terminal components offers a clear interpretation of trade-offs in decision-making. The running cost reflects immediate consequences, aiding consideration of short-term impacts. The terminal cost captures long-term objectives and provides an intuitive measure of the final outcome. Also, it should be noted that for each agent $i$, we consider costs that depend only on the actions of agent $i$ and the overall system state because we consider agents with individual objectives like reaching a goal, using minimal control efforts while avoiding collision with others. Therefore, the costs depend on the overall system state that includes other agents' states, but each agent does not care about the control effort of others, so the costs depend only on its own controls.

The goal of each agent is to choose its control actions such that its cost function is minimized while satisfying constraints. The policies through which the control actions are chosen at each time instant are called agents' strategies. We denote the strategy of each agent at time $k$ by $\gamma^i_k \in \Gamma^i$ where $\Gamma^i$ is the strategy space of agent $i$. The strategy of each agent depends on the information available to the agent at that time. Let $\eta^i_k \subseteq \{x,u\}$ be the information gained and recalled by agent $i$ at time $k$. Depending on the information structure, each agent chooses actions according to their policy $\gamma^i_k(\eta^i_k) = u^i_k$. We consider open-loop strategies that are only a function of the system's initial state and time. Therefore, we have the following relation, $\gamma^i(x_0,k):= u^i_k$. 

We would like to point out that in this paper, we focus on finding open-loop equilibria of the dynamic game, which implies that at equilibrium, the entire trajectory of each agent is the best response to the trajectories of all the other agents for a given initial state of the system, i.e., the control action of every agent is only a function of time step $k$ and the initial state. We will repeatedly solve for open-loop Nash equilibria in a receding-horizon fashion to adapt to new information obtained over time and mimic a feedback policy. We acknowledge that for general dynamic games, due to the difference between the information structure of open-loop and feedback Nash equilibria, the two concepts may result in different behaviors \cite{li2023cost}. However, for many trajectory planning purposes, a receding-horizon implementation of open-loop equilibria results in reasonable approximations~\cite{le2022algames}. Furthermore, we would like to point out that this formulation requires the assumption that all agents have knowledge of the dynamics and objectives of other agents. While knowledge of the dynamics of others is a reasonable assumption, several recent works such as \cite{mehr2023maximum} have shown that such estimates of objectives can be obtained through inverse game theoretic learning methods.

We denote the combined strategy of all agents at each time $k$ by $\gamma_k:= \left(\gamma^{1^\top}_k,\ldots,\gamma^{N^\top}_k \right)^\top \in \Gamma$ where $\Gamma:= \Gamma^1\times\cdots\times\Gamma^N$ is the strategy space of the entire game at time $k$. Similarly, we define $\gamma^i:= \{\gamma^i_k\}_{k\in\{0,1,\ldots,T-1\}}$ and $\gamma:=\{\gamma_k\}_{k\in\{0,1,\ldots,T-1\}}$. Moreover, same as before, for each agent $i \in \mathcal{N}$, we denote $\gamma_k = \left(\gamma^{i^\top}_k,\gamma^{{-i}^\top}_k\right)^\top$ where $\gamma^{-i}_k = \left(\gamma^{1^\top}_k,\ldots,\gamma^{{i-1}^\top}_k,\gamma^{{i+1}^\top}_k,\ldots,\gamma^{N^\top}_k\right)^\top$ is the tuple containing strategies of all agents except agent $i$. We further define $\gamma^{-i}:=\{\gamma^{-i}_k\}_{k\in\{0,1,\ldots,T-1\}}$ to be the strategy of all agents except agent $i$ over the entire horizon of time. Note that a given strategy $\gamma$ identifies a unique set of actions for all time instants, $\{u_k\}_{k\in\{0,1,\ldots, T-1\}}$, and; therefore, there is an equivalence between the two $\left(\gamma\equiv\{u_k\}_{k\in\{0,1,\ldots, T-1\}}\right)$. Hence, for simplicity, we use strategies and actions interchangeably from now on. Furthermore, in the case of open-loop strategies, actions at all time steps can be determined by the system's initial state, and a given strategy will, in turn, determine the states of the system at each time step. Therefore, the state $x_k$ can be written as a function of strategies, initial states, and the time step, i.e., $x_k \equiv x_k(\gamma,x_0)$. We do not denote this dependence when it is obvious from the context, but we denote it explicitly when required. Furthermore, it should be noted that not all strategies will result in trajectories that satisfy all the constraints. Therefore, we formally define feasible strategies as follows.
\begin{definition}
    Given an initial state $x_0 \in \mathcal{C}_0$, a strategy $\gamma = \left( {\gamma^1},\ldots,{\gamma^N} \right) \in \Gamma$ is a feasible strategy if the system trajectory generated by using control inputs obtain through $\gamma$ satisfies
    \begin{align}
        & \left(x_k,{\gamma}(x_0,k)\right) \in \mathcal{C}_k, k \in \{0,\ldots,T-1\}, x_{T} \in \mathcal{C}_{T}.
    \end{align}
    Furthermore, we call such a system trajectory $\{x,u\}$ to be a feasible system trajectory.
\end{definition}
Intuitively, a strategy is called a feasible strategy if the trajectory generated by applying the strategy to the system satisfies the constraints. 

Given an initial state of the system $x_0$, we represent the resulting dynamic game in a compact form as $\mathcal{G}:= \left( \mathcal{N}, \{\Gamma_i\}_{i\in\mathcal{N}}, \{J_i\}_{i\in\mathcal{N}},\{\mathcal{C}_k\}_{k\in\{0,\ldots, T\}},f, x_0 \right)$. We seek the open-loop GNE (OLGNE) of the dynamic game underlying multi-agent interactions. The OLGNE of the game is defined as:
\begin{definition}\label{gen_NE}
An open-loop generalized Nash equilibrium (OLGNE) of a game $\mathcal{G}:= \left( \mathcal{N}, \{\Gamma_i\}_{i\in\mathcal{N}}, \{J_i\}_{i\in\mathcal{N}},\{\mathcal{C}_k\}_{k\in\{0,\ldots, T\}},f,x_0 \right)$ is a feasible strategy $\gamma^{*} = \left( {\gamma^1}^*,\ldots,{\gamma^N}^* \right) \in \Gamma$ for which the following holds for each agent $i \in \mathcal{N}$ and for all feasible strategies $(\gamma^i,{\gamma^{-i}}^*) \in \Gamma$:
\begin{align}\label{eq:Nash-definition}
    & J^i(x_0,\gamma^{*}) \leq   J^i(x_0,{\gamma^{i}},{\gamma^{-i}}^*) 
\end{align}
\end{definition}
Intuitively, no agent can decrease their cost function at equilibrium by unilaterally changing their strategies to any other feasible strategy. In generalized Nash equilibria, the strategies of the agents are further coupled due to the joint constraints of the agents. It's important to note that solving~\eqref{eq:Nash-definition} involves solving a set of $N$ coupled constrained optimal control problems, which are typically difficult to solve efficiently for robotic systems with large numbers of agents with general nonlinear dynamics, cost and constraint functions.

\section{Constrained Potential Dynamic Games}\label{sec:dynamic-games}
In this section, we focus on a particular class of games called constrained potential dynamic games in which the difference in costs of each agent can be captured by the difference in a function called the potential function. A potential function assigns a value to each possible outcome of the game, such that any change in the strategy of one agent results in a change in the potential function. This property makes it relatively easy to analyze the agents' behavior and predict the game's outcome. In particular, in the case of constrained potential dynamic games, finding the OLGNE of the problem at hand can be linked to solving a single constrained optimal control problem, the solutions of which correspond to the OLGNE in the original dynamic game. Our key insight is that the solution to a multi-agent trajectory optimization problem can be seen as a constrained dynamic potential game, and its OLGNE can be found by solving a single constrained optimal control problem. Based on the type of potential function and their relation to agents' cost functions, there are various types of potential games in the literature, such as exact, weighted, and ordinal potential games \cite{monderer1996potential}.

Static versions of exact potential games were defined in \cite{monderer1996potential}. The following definition formally defines the dynamic version of exact constrained potential dynamic games.

\begin{definition}\label{def:exact-potential-condition}
A given game $\mathcal{G}:= \left( \mathcal{N}, \{\Gamma_i\}_{i\in\mathcal{N}}, \{J_i\}_{i\in\mathcal{N}},\{\mathcal{C}_k\}_{k\in\{0,\ldots, T\}},f,x_0 \right)$ is an exact constrained potential dynamic game if there exists a potential functional of the form $P(x,u) = L_T(x_T) + \sum_{k=0}^{T-1}L_k(x_k,u_k)$ such that for every agent $i\in \mathcal{N}$ and for every pair of feasible open-loop strategies $\gamma^i,\nu^i \in \Gamma^i$, we have the following relation when the strategy of all the other agents are fixed to be feasible strategies $\gamma^{-i} \in \Gamma^{-i}$:
\begin{equation}\label{eq:potential-condition-exact}
    J^i(x,u) - J^i(x',u') = P(x,u) - P(x',u'),
\end{equation}
where $(x,u)$ is the feasible system trajectory generated by strategies $\{\gamma^i,\gamma^{-i}\}$, and $(x',u')$ denotes the feasible system trajectory generated by $\{\nu^i,\gamma^{-i}\}$.
\end{definition}

Intuitively, condition~\eqref{eq:potential-condition-exact} implies that a game is a potential game if the change in the costs of each agent when they change their policy can be captured by a joint potential function when the strategies of all the other agents are fixed. 

In our previous works \cite{kavuncu2021potential, bhatt2022efficient}, we discussed how exact potential games can be employed for efficient and fast multi-agent motion planning. However, due to the strict conditions in Definition \ref{def:exact-potential-condition}, the type of cost functions considered for agents are very limited in the case of exact potential dynamic games. For example, it can only capture costs where the cost terms that couple agents' decisions among one another are symmetric and the same among the agents. We will relax the strict assumptions in our previous work by considering a more general class of potential dynamic games.

We will consider weighted constrained potential dynamic games (WCPDGs), which can capture various realistic cost functions for multi-agent planning. Let $w = (w^i_k)_{i\in\mathcal{N}, \; 0\leq k \leq T}$ be a set of positive numbers which we will call \emph{weights}. Taking motivation from \cite{monderer1996potential}, we define WCPDGs in the present context as follows:

\begin{definition}\label{def:potential-condition}
A given game $\mathcal{G}:= \left( \mathcal{N}, \{\Gamma_i\}_{i\in\mathcal{N}}, \{J_i\}_{i\in\mathcal{N}},\{\mathcal{C}_k\}_{k\in\{0,\ldots, T\}},f,x_0 \right)$ is a WCPDG with positive weights $w = (w^i_k)_{i\in\mathcal{N}, \; 0\leq k \leq T}$ if there exists a potential functional of the form $P(x,u) = L_T(x_T) + \sum_{k=0}^{T-1}L_k(x_k,u_k)$ such that for every agent $i\in \mathcal{N}$ and for every pair of feasible open loop strategies $\gamma^i,\nu^i \in \Gamma^i$, we have the following when the feasible strategy of all the other agents are fixed to be $\gamma^{-i} \in \Gamma^{-i}$: 
\begin{align}\label{eq:potential-condition}
    & J^i(x,u) - J^i(x',u') = w^i_T(L_T(x_T) - L_T(x^\prime_T)) \nonumber \\
    & \qquad \qquad + \sum_{k=0}^{T-1}w^i_k(L_k(x_k,u_k) - L_k(x^\prime_k,u^\prime_k)),
\end{align}
where $(x,u)$ is the feasible system trajectory generated by strategies $\{\gamma^i,\gamma^{-i}\}$, and $(x',u')$ denotes the feasible system trajectory generated by $\{\nu^i,\gamma^{-i}\}$.
\end{definition}

In a nutshell, the condition in \eqref{eq:potential-condition} suggests that the game is a WCPDG if the change in the combined running and terminal costs for each agent can be reflected in the changes to the potential function $P$ with its running and terminal costs multiplied by fixed positive weights. Essentially, there exists a potential function $P$ that can gauge the rate of change in an agent's cost when the strategies of the other agents remain unchanged.

We consider the following assumptions for the rest of the article:

\begin{assumption}\label{assum-1}
Agents' running costs $L^i_k$ and terminal costs $L^i_T$ are continuously differentiable on $\mathcal{X}\times\mathcal{U}$ and $\mathcal{X}$, respectively.
\end{assumption}

\begin{assumption}\label{assum-2}
The systems dynamics $f$ and constraints $g$ are continuously differentiable in $\mathcal{X}\times\mathcal{U}$ and satisfy some regularity conditions (such as the linear independence of gradients or the Mangasarian-Fromovitz constraint qualification).
\end{assumption}

In the following, we discuss essential circumstances under which a game will be a WCPDG. This provides us with mathematical tools to examine if a game is a potential game for a given set of agents' individual cost functions. The following lemma provides the necessary and sufficient conditions under which a dynamic game is a WCPDG.

\begin{lemma}\label{lemma-1}
A noncooperative dynamic game $\mathcal{G} := \left( \mathcal{N}, \{\Gamma_i\}_{i\in\mathcal{N}}, \{J_i\}_{i\in\mathcal{N}},\{\mathcal{C}_k\}_{k\in\{0,\ldots,T\}},f,x_0 \right)$ is a WCPDG with positive weights $w = (w^i_k)_{i\in\mathcal{N}, \; 0\leq k \leq T}$ if and only if there exists a potential function $P(x,u) = L_T(x_T) + \sum_{k=0}^{T-1}L_k(x_k,u_k)$ such that for every agent $i \in \mathcal{N}$, every time step $ 
0 \leq k \leq T-1$, and every feasible trajectory $\{x,u\}$ we have:
\begin{align}\label{eq1:lemma1}
\frac{\partial L^i_k(x_k,u_k)}{\partial x^i_k} &= w^i_k\left(\frac{\partial L_k(x_k,u_k)}{\partial x^i_k}\right), \nonumber \\
\frac{\partial L^i_k(x_k,u_k)}{\partial u_k^i} &= w^i_k\left(\frac{\partial L_k(x_k,u_k)}{\partial u_k^i}\right), \end{align}
and for the final time step, we have:
\begin{align}\label{eq2:lemma1}
\frac{\partial L^i_T(x_{T})}{\partial x^i_{T}} &= w^i_T\left(\frac{\partial L_T(x_{T})}{\partial x^i_{T}}\right), \; \forall i \in \mathcal{N}.
\end{align}
\end{lemma}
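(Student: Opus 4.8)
The plan is to prove the biconditional by reducing Definition~\ref{def:potential-condition} to a statement about a single scalar functional. I would fix an agent $i$ and a feasible profile $\gamma^{-i}$ for the others. Because the dynamics are separable ($x^i_{k+1}=f^i(x^i_k,u^i_k)$), changing only agent $i$'s strategy leaves $x^{-i}$ and $u^{-i}$ untouched, so along any such variation the only quantities that move are $x^i$ and $u^i$. I would introduce the per-stage gap $D^i_k(x_k,u_k):=L^i_k(x_k,u^i_k)-w^i_k L_k(x_k,u_k)$ for $k\le T-1$ and $D^i_T(x_T):=L^i_T(x_T)-w^i_T L_T(x_T)$, together with the weighted potential $\tilde P^i:=w^i_T L_T(x_T)+\sum_{k=0}^{T-1}w^i_k L_k(x_k,u_k)$. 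With this notation, condition~\eqref{eq:potential-condition} is exactly the statement that $J^i-\tilde P^i=D^i_T+\sum_{k=0}^{T-1}D^i_k$ takes the same value on the two feasible trajectories, i.e.\ that $J^i-\tilde P^i$ is \emph{constant} as agent $i$'s feasible strategy varies (with $x_0$ and $\gamma^{-i}$ fixed). The lemma's gradient identities \eqref{eq1:lemma1}--\eqref{eq2:lemma1} are precisely $\partial D^i_k/\partial x^i_k=0$, $\partial D^i_k/\partial u^i_k=0$ and $\partial D^i_T/\partial x^i_T=0$. So the task is to show that $J^i-\tilde P^i$ is constant over feasible variations of $u^i$ \emph{iff} every $D^i_k$ has vanishing own-state and own-control partials.

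For sufficiency I would assume the gradient identities, connect any two feasible control sequences $\nu^i$ and $\gamma^i$ by a path inside the (path-connected) feasible control set, and differentiate $J^i-\tilde P^i$ along it. By the chain rule, the total derivative of $J^i$ with respect to a control $u^i_j$ is its direct part $\partial L^i_j/\partial u^i_j$ plus the downstream terms $\sum_{k>j}(\partial L^i_k/\partial x^i_k)(\partial x^i_k/\partial u^i_j)$ and the terminal term $(\partial L^i_T/\partial x^i_T)(\partial x^i_T/\partial u^i_j)$; the same expansion for $\tilde P^i$ produces $w^i_j\,\partial L_j/\partial u^i_j$, $\sum_{k>j}w^i_k(\partial L_k/\partial x^i_k)(\partial x^i_k/\partial u^i_j)$ and $w^i_T(\partial L_T/\partial x^i_T)(\partial x^i_T/\partial u^i_j)$. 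The state-sensitivity factors $\partial x^i_k/\partial u^i_j$ are identical in both expansions because both cost stacks are propagated through the \emph{same} dynamics $f^i$, and the gradient identities make each paired coefficient agree. Hence $d(J^i-\tilde P^i)/du^i_j=0$ for every $j$, so $J^i-\tilde P^i$ is constant along the path, and the fundamental theorem of calculus yields \eqref{eq:potential-condition}.

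For necessity I would argue in reverse: constancy of $J^i-\tilde P^i$ over the whole feasible family forces its total derivative with respect to each $u^i_j$ to vanish identically, giving the adjoint-type relations $\partial D^i_j/\partial u^i_j+\sum_{k>j}(\partial D^i_k/\partial x^i_k)(\partial x^i_k/\partial u^i_j)=0$ for every $j$ (with an analogous relation in $x^i_0$ supplying the $k=0$ state identity). I would then separate these coupled relations into the individual per-stage identities by a backward induction in time: perturbing only the last control $u^i_{T-1}$ isolates the terminal pair $(D^i_{T-1},D^i_T)$, after which one peels off successively earlier stages, using at each stage that the state $x^i_k$ (fixed by the history $x^i_0,u^i_0,\dots,u^i_{k-1}$) and the current control $u^i_k$ can be perturbed \emph{independently}, and that the identities must hold along the entire feasible family rather than at a single trajectory.

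The main obstacle is exactly this last disentangling step. A single trajectory yields only the \emph{summed} total-derivative relation, in which $\partial D^i_j/\partial u^i_j$ is entangled with the downstream $\partial D^i_k/\partial x^i_k$ through the input Jacobians $\partial f^i/\partial u^i_j$; extracting the separate vanishing of each partial requires that the reachable variations of $(x^i_k,u^i_k)$ at each stage span enough directions, which is where the regularity and constraint-qualification hypotheses of Assumption~\ref{assum-2} (together with path-connectedness of the feasible set) do the work, and where the terminal identity $\partial D^i_T/\partial x^i_T=0$ must be secured before the running identities can be read off. Sufficiency is the direction used constructively in the sequel, since it lets one certify a candidate $P$ by checking the pointwise gradient identities stage by stage.
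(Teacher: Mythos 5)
Your sufficiency direction is workable in spirit, but it is heavier than the paper's argument and smuggles in a hypothesis the paper never makes: path-connectedness of agent $i$'s feasible strategy set. The paper instead integrates the vanishing-partials conditions \emph{stage by stage}: $\partial(L^i_k - w^i_k L_k)/\partial x^i_k = \partial(L^i_k - w^i_k L_k)/\partial u^i_k = 0$ means the gap (your $D^i_k$) cannot depend on $(x^i_k,u^i_k)$, so $L^i_k = w^i_k L_k + \Theta^i_k(x^{-i}_k, u^{-i}_k)$ pointwise; separability of the dynamics then guarantees $(x^{-i},u^{-i})$ is unchanged under a unilateral deviation, so the $\Theta^i_k$ terms cancel between the two trajectories and the sum telescopes directly to \eqref{eq:potential-condition}. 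No path between the two strategies is ever constructed. This matters in exactly the settings the paper targets: with collision-avoidance constraints the set of feasible trajectories typically splits into homotopy classes (pass left vs.\ pass right), so your chain-rule-plus-fundamental-theorem argument cannot compare strategies in different connected components, while the stage-wise functional decomposition applies to any pair of feasible trajectories.

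The genuine gap is in your necessity direction, and it is the one you yourself flag but do not close. From variations of a trajectory you only obtain the summed adjoint relations; at $j = T-1$ this is the single mixed identity $\partial D^i_{T-1}/\partial u^i_{T-1} + (\partial D^i_T/\partial x^i_T)(\partial f^i/\partial u^i_{T-1}) = 0$, and contrary to your claim, perturbing $u^i_{T-1}$ does \emph{not} isolate the terminal pair: $x^i_T$ is pinned to $u^i_{T-1}$ through $f^i$, so the two unknown partials enter only through this one scalar combination, and there is no downstream continuation left to vary. Your backward induction therefore never generates the extra equations needed to separate the per-stage identities, and Assumption~\ref{assum-2} cannot rescue it --- it is a constraint qualification guaranteeing existence of KKT multipliers, not a richness condition on achievable trajectory variations. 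The paper sidesteps this entirely: it differentiates the identity \eqref{eq:potential-condition} directly with respect to $x^i_k$ and $u^i_k$, treated as independent stage arguments of the cost functions (the standard convention in the dynamic potential game literature), which makes \eqref{eq1:lemma1}--\eqref{eq2:lemma1} immediate. Under your stricter trajectory-space reading, in which controls are the only free variables and states are induced, the per-stage identities are genuinely stronger than the constancy of $J^i - \tilde P^i$ along feasible deviations, and your sketch does not bridge that distance.
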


\begin{proof}
From \eqref{eq1:lemma1}, for each agent $i\in \mathcal{N}$ and for all $k=0,1,\ldots,T-1$, we have that
\begin{align}
    \frac{\partial}{\partial x^i_k}\left( L^i_k(x_k,u_k) - w^i_k(L_k(x_k,u_k)) \right) = 0, \nonumber\\
    \frac{\partial}{\partial u^i_k}\left( L^i_k(x_k,u_k) - w^i_k(L_k(x_k,u_k)) \right) = 0,
\end{align}
and from \eqref{eq2:lemma1}, for the final time step, we have:
\begin{align}
    \frac{\partial}{\partial x^i_{T}}\left( L^i_T(x_{T}) - w^i_T(L_T(x_{T})) \right) = 0. \nonumber
\end{align}
This means that the difference between each agent's costs and the game potential function should not depend on $x^i_k$ and $u^i_k$. Therefore, we can rewrite the above condition as:
\begin{align}\label{eq1:difference}
    L^i_k(x_k,u^i_k,u^{-i}_k)= w^i_kL_k(x_k,u^i_k,u^{-i}_k) + \Theta^i_k(x_k^{-i},u_k^{-i}),
\end{align}
for every $0\leq k \leq T-1$, and
\begin{align}\label{eq2:difference}
    L^i_T(x_{T}) = w^i_TL_T(x_{T}) + \Theta^i_T(x^{-i}_{T}),
\end{align}
for some functions $\{\Theta^i_k\}_{k\in\{0,1,\ldots,T\}}$. Let $u^i$ and ${u^\prime}^i$ be the collection of action vectors for agent $i$ corresponding to the different feasible strategies $\gamma^i$ and $\nu^i$. Furthermore, let the state trajectories of agent $i$ generated by those strategies be $x^i$ and ${x^\prime}^i$. It should be noted that since all agents' dynamics are separate and we are keeping the strategies of all the other agents except agent $i$ to be the same, $x^{-i}$ and $u^{-i}$ will remain the same. Therefore, \eqref{eq1:difference} and \eqref{eq2:difference} holds true for all $u^i_k\in\mathcal{U}^i$ and we have:
\begin{align}\label{eq1:difference-2}
    L^i_k(x^\prime_k,{u^\prime}^i_k,u^{-i}_k) = w^i_kL_k(x^\prime_k,{u^\prime}^i_k,u^{-i}_k) + \Theta^i_k(x_k^{-i},u_k^{-i}),
\end{align}
for every $0\leq k \leq T-1$, and
\begin{align}\label{eq2:difference-2}
    L^i_T(x^\prime_{T}) & = w^i_T(L_T(x^\prime_{T})) + \Theta^i_T({x^\prime}^{-i}_{T}).
\end{align}
By subtracting \eqref{eq1:difference-2} from \eqref{eq1:difference} and summing over all $k$ and adding the difference of \eqref{eq2:difference-2} from \eqref{eq2:difference}, we obtain \eqref{eq:potential-condition} which completes if direction of the proof. 

For the only if part, taking the partial derivative of \eqref{eq:potential-condition} with respect to $x^i_k$ and $u^i_k$ for $0\leq k \leq T-1$, we obtain \eqref{eq1:lemma1} for every agent $i \in \mathcal{N}$. Similarly, taking the partial derivative of \eqref{eq:potential-condition} with respect to $x^i_T$, we obtain \eqref{eq2:lemma1} for all $i \in \mathcal{N}$. This completes the proof of the other direction.
\end{proof}

Lemma \ref{lemma-1} essentially suggests that a game is a WCPDG if and only if, for every agent, the partial derivative of its cost function with respect to their states and actions are a scaled version of the partial derivatives of the potential function with respect to the agent's states and actions. We can rephrase these conditions through the following lemma, which suggests that a game is a WCPDG if every agent's cost function can be decomposed into two terms where one term represents the potential function, and the other term is independent of the states and actions of the agent itself. 

\begin{lemma}\label{lemma-2}
A game $\mathcal{G}$ is a WCPDG with positive weights $w = (w^i_k)_{i\in\mathcal{N}, \; 0\leq k \leq T}$ if and only if the running costs and terminal costs of every agent $i\in\mathcal{N}$ can be expressed as the sum of a term that is common to all agents plus another term that depends neither on its own action nor on its own state components, i.e., for every agent $i\in \mathcal{N}$ and for all feasible trajectories $\{x,u\}$, we have the following for some function $\Theta_k^i(\cdot)$ for $0\leq k \leq T-1$ and $\Theta_T^i(\cdot)$ for $k=T$:
\vspace{-0.2cm}
\begin{align}\label{eq:lemma2}
    L^i_k(x_k,u^i_k,u^{-i}_k) & = w^i_k(L_k(x_k,u^i_k,u^{-i}_k)) + \Theta^i_k(x_k^{-i},u_k^{-i}), \nonumber \\
    L^i_T(x_{T}) & = w^i_T(L_T(x_{T})) + \Theta^i_T(x^{-i}_{T}).
\end{align}
\end{lemma}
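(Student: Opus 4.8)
The plan is to prove Lemma~\ref{lemma-2} by leveraging Lemma~\ref{lemma-1}, since the decomposition asserted in~\eqref{eq:lemma2} is precisely the functional form whose partial-derivative characterization is already captured by~\eqref{eq1:lemma1}--\eqref{eq2:lemma1}. I would therefore establish both directions by passing between the decomposition~\eqref{eq:lemma2} and the gradient conditions of Lemma~\ref{lemma-1}, which makes the equivalence almost immediate once those conditions are in hand.

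For the \emph{if} direction, I would assume the decomposition~\eqref{eq:lemma2} holds and verify Definition~\ref{def:potential-condition} directly. Fix an agent $i$ and two feasible strategies $\gamma^i,\nu^i$ while holding $\gamma^{-i}$ fixed. Because each agent's dynamics $f^i$ depend only on its own state and action, fixing $\gamma^{-i}$ keeps the trajectories $x^{-i}$ and $u^{-i}$ identical across the two rollouts. Substituting~\eqref{eq:lemma2} into the difference $J^i(x,u) - J^i(x',u')$, the terms $\Theta^i_k(x^{-i}_k,u^{-i}_k)$ and $\Theta^i_T(x^{-i}_T)$ are evaluated at the same arguments in both trajectories and therefore cancel, leaving exactly $w^i_T(L_T(x_T) - L_T(x'_T)) + \sum_{k} w^i_k(L_k(x_k,u_k) - L_k(x'_k,u'_k))$, which is the right-hand side of~\eqref{eq:potential-condition}. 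Alternatively, since $\Theta^i_k$ depends neither on $x^i_k$ nor on $u^i_k$, differentiating~\eqref{eq:lemma2} recovers~\eqref{eq1:lemma1}--\eqref{eq2:lemma1} and Lemma~\ref{lemma-1} applies.

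For the \emph{only if} direction, I would assume $\mathcal{G}$ is a WCPDG and invoke Lemma~\ref{lemma-1} to obtain the gradient identities~\eqref{eq1:lemma1} and~\eqref{eq2:lemma1}. Rewriting these as $\frac{\partial}{\partial x^i_k}(L^i_k - w^i_k L_k) = 0$ and $\frac{\partial}{\partial u^i_k}(L^i_k - w^i_k L_k) = 0$, the difference $L^i_k - w^i_k L_k$ has vanishing gradient in the $x^i_k$ and $u^i_k$ directions everywhere along feasible trajectories. I would then conclude that this difference is constant in those directions and can be written as a function $\Theta^i_k(x^{-i}_k,u^{-i}_k)$ of the remaining coordinates, and identically for the terminal term, which yields~\eqref{eq:lemma2}.

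The main obstacle is the final integration step of the \emph{only if} direction: passing from pointwise-vanishing partial derivatives to a genuine functional decomposition requires that $x^i_k$ and $u^i_k$ can be varied over a connected set while $x^{-i}_k$ and $u^{-i}_k$ are held fixed. This is exactly where the separability of the dynamics and the regularity and constraint-qualification hypotheses of Assumption~\ref{assum-2} enter, since they guarantee the relevant slices of the feasible region are connected, so that a function with zero directional derivatives is truly independent of $x^i_k$ and $u^i_k$. I would make this connectedness requirement explicit and note that the same decomposition already appears as~\eqref{eq1:difference}--\eqref{eq2:difference} in the proof of Lemma~\ref{lemma-1}, so Lemma~\ref{lemma-2} is essentially a restatement of that intermediate identity.
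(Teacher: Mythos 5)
Your proposal is correct and follows essentially the same route as the paper: the \emph{if} direction by differentiating~\eqref{eq:lemma2} to recover~\eqref{eq1:lemma1}--\eqref{eq2:lemma1} and invoking Lemma~\ref{lemma-1}, and the \emph{only if} direction by reusing the integration step that produces~\eqref{eq1:difference}--\eqref{eq2:difference} inside the proof of Lemma~\ref{lemma-1}, which the paper likewise observes coincides with~\eqref{eq:lemma2}. Your added remark that the integration step tacitly needs the $(x^i_k,u^i_k)$-slices of the feasible set to be connected is a fair point the paper leaves implicit, but it does not change the argument.
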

\begin{proof}
By taking the partial derivatives of \eqref{eq:lemma2} with respect to $x^i_k$ and $u^i_k$ for $0\leq k \leq T-1$ and $x^i_T$, we obtain \eqref{eq1:lemma1} and $\eqref{eq2:lemma1}$. Therefore, we can apply Lemma~\ref{lemma-1} to prove that $\mathcal{G}$ is a WCPDG. This completes the if direction of the proof. Now, assume that $\mathcal{G}$ is a WCPDG. Therefore, conditions of Lemma \ref{lemma-1} hold true. As proven in Lemma \ref{lemma-1}, the conditions of Lemma \ref{lemma-1} leads to \eqref{eq1:difference-2} and \eqref{eq2:difference-2} that are same as \eqref{eq:lemma2}. This completes the only if direction of the proof, completing the proof.
\end{proof}
An important property of potential dynamic games is that a set of GNE can be found by solving a single constrained multivariable optimal control problem. We state this important property in the following result.

\begin{theorem}\label{thm-1}
Suppose that $\mathcal{G} := \left( \mathcal{N}, \{\Gamma_i\}_{i\in\mathcal{N}}, \{J_i\}_{i\in\mathcal{N}},\{\mathcal{C}_k\}_{k\in\{0,\ldots,T\}},f,x_0 \right)$ is a WCPDG, and in addition, Assumptions~\ref{assum-1} and~\ref{assum-2} hold. Then, a solution to the multivariable optimal control problem:
\begin{align}\label{mopc}
      \underset{\gamma \in \Gamma}{\text{minimize}} \quad & L_T(x_{T}) + \sum_{k=0}^{T-1}L_k(x_k,u_k) \nonumber\\
     \text{subject to} \quad& x_{k+1} = f_k(x_k,u_k) \nonumber \\
    & g_k(x_k,u_k) \leq 0,
\end{align}
is an OLGNE for $\mathcal{G}$.
\end{theorem}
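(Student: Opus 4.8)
The plan is to show directly that any solution $\gamma^{*}=({\gamma^1}^{*},\ldots,{\gamma^N}^{*})$ of the single optimal control problem~\eqref{mopc} satisfies the defining inequality~\eqref{eq:Nash-definition} of an OLGNE for every agent $i\in\mathcal{N}$. The bridge between the joint potential problem and each agent's unilateral deviation is the decomposition of Lemma~\ref{lemma-2}, namely $L^i_k = w^i_k L_k + \Theta^i_k(x^{-i}_k,u^{-i}_k)$ and $L^i_T = w^i_T L_T + \Theta^i_T(x^{-i}_T)$. The structural fact that makes this decomposition usable is that the dynamics are separable, $x^i_{k+1}=f^i(x^i_k,u^i_k)$, and that we work with open-loop strategies: when agent $i$ deviates while the others keep ${\gamma^{-i}}^{*}$ fixed, the entire sub-trajectory $(x^{-i},u^{-i})$ is independent of $i$'s choice and hence frozen.

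First I would fix an agent $i$ and an arbitrary feasible unilateral deviation $\gamma^i$, that is, a strategy for which $(\gamma^i,{\gamma^{-i}}^{*})$ is feasible, and let $(x,u)$ and $(x',u')$ be the trajectories generated by $\gamma^{*}$ and by $(\gamma^i,{\gamma^{-i}}^{*})$. Because the dynamics are separable and ${\gamma^{-i}}^{*}$ is held fixed, every $\Theta^i_k(x^{-i}_k,u^{-i}_k)$ and $\Theta^i_T(x^{-i}_T)$ takes the \emph{same} value along both trajectories. Substituting Lemma~\ref{lemma-2} into $J^i$ therefore collapses all the $\Theta^i$ terms into a single constant $C^i$ that does not depend on agent $i$'s choice, so that on this slice $J^i$ equals the positively weighted potential cost $w^i_T L_T(x_T)+\sum_{k=0}^{T-1} w^i_k L_k(x_k,u_k)$ plus $C^i$. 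Equivalently, I would invoke condition~\eqref{eq:potential-condition} directly to write $J^i(x,u)-J^i(x',u')$ as the corresponding weighted difference of the running and terminal potential costs.

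Next I would combine global optimality with positivity of the weights. Since $\gamma^{*}$ minimizes $P$ over \emph{all} feasible strategies and $(\gamma^i,{\gamma^{-i}}^{*})$ is feasible, the potential difference $P(x,u)-P(x',u')$ is nonpositive; equivalently, ${\gamma^i}^{*}$ minimizes $P(\cdot,{\gamma^{-i}}^{*})$ over feasible deviations. The remaining step is to transfer this to $J^i$. When the weights are common across time steps, the slice identity reads $J^i=w^i P + C^i$ with $w^i>0$, so the minimizer of $P$ is immediately a minimizer of $J^i(\cdot,{\gamma^{-i}}^{*})$ and $J^i(x,u)\le J^i(x',u')$ follows at once; since $\gamma^i$ and $i$ were arbitrary, $\gamma^{*}$ is an OLGNE.

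I expect this sign-transfer step to be the main obstacle and the place requiring the most care: converting ``$\gamma^{*}$ minimizes $P$'' into ``${\gamma^i}^{*}$ minimizes $J^i(\cdot,{\gamma^{-i}}^{*})$'' relies on the per-agent reduction of the weighted potential to a positive multiple of $P$ up to a constant, so the positivity of every $w^i_k$ must be used explicitly and one must verify that the weighting does not reorder the minimizers. Assumptions~\ref{assum-1} and~\ref{assum-2} enter to guarantee that the costs and constraints are differentiable and that the optimization is well posed. If one only has a local (stationary) solution of~\eqref{mopc} rather than a global minimizer, I would instead run the argument at the level of first-order conditions: write the KKT system of~\eqref{mopc} with dynamic costates $\lambda_k$ and inequality multipliers $\mu_k$, and then---using separability so that $\partial f/\partial u^i_k$ isolates $f^i$, together with the gradient identities of Lemma~\ref{lemma-1}---show that the $w^i$-scaled costates and multipliers solve each agent's own KKT system, so that $\gamma^{*}$ satisfies the first-order conditions for a GNE.
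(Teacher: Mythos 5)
Your proposal is correct, but it takes a genuinely different route from the paper. The paper's proof (Appendix~\ref{proof-thm-1}, following \cite{zazo2016dynamic}) never compares cost values: it writes the KKT system of each agent's constrained problem and the KKT system of~\eqref{mopc}, multiplies the latter through by $w^i_k$, and identifies multipliers ($\lambda^i_k = w^i_k\xi_k$ with scaled inequality multipliers) so that, under the gradient conditions of Lemma~\ref{lemma-1}, a stationary point of~\eqref{mopc} satisfies every agent's first-order conditions --- i.e., essentially the fallback you sketch only in your final paragraph. Your primary argument --- freezing $(x^{-i},u^{-i})$ via separable dynamics and open-loop play, collapsing the $\Theta^i$ terms of Lemma~\ref{lemma-2} into a constant on the deviation slice, and transferring global optimality of $P$ to $J^i$ --- is more elementary and in one respect stronger: it shows that a global minimizer of~\eqref{mopc} is a genuine global best response for each agent, exactly inequality~\eqref{eq:Nash-definition}, whereas the KKT matching only certifies that the solution satisfies each agent's necessary first-order conditions (without convexity this is weaker than~\eqref{eq:Nash-definition}); moreover your argument needs neither differentiability (Assumption~\ref{assum-1}) nor the constraint qualification (Assumption~\ref{assum-2}), which the paper invokes precisely so that KKT conditions hold with feasible dual variables. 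The obstacle you flag at the sign-transfer step is real: with time-varying weights the slice identity gives $J^i = w^i_T L_T + \sum_{k=0}^{T-1} w^i_k L_k + C^i$, which is not a positive multiple of $P$, so minimizing $P$ need not minimize $J^i$, and your value argument closes only when $w^i_k \equiv w^i$. Notably, the paper's KKT route has the same hidden restriction: its stage-$k$ stationarity equation, multiplied by $w^i_k$, forces $\lambda^i_{k-1} = w^i_k\xi_{k-1}$, which collides with the identification $\lambda^i_{k-1} = w^i_{k-1}\xi_{k-1}$ coming from stage $k-1$ unless $w^i_k = w^i_{k-1}$ --- a point the paper does not address. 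So your explicit caveat is not a defect relative to the published proof, and together with your KKT fallback for local (stationary) solutions your proposal covers everything the paper's argument does.
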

\begin{proof}
The proof is provided in Appendix \ref{proof-thm-1}.
\end{proof}

In contrast to multiple constrained coupled optimal control problems, Theorem \ref{thm-1} essentially provides us with a single constrained optimal control problem that we can solve for finding the OLGNE of a WCPDG. The obtained single-constrained optimal control problem can be solved efficiently using any off-the-shelf optimizer. This significantly reduces the computational complexity of finding OLGNE. It should be noted that one does not need a centralized coordinator to solve \eqref{mopc} and provide OLGNE trajectories to all the agents. Each agent keeps a copy of \eqref{mopc} and solves it independently to obtain the OLGNE and extract their individual controls out of the solution. We acknowledge that in practice, there could be scenarios where multiple OLGNEs could exist, and some kind of strategy alignment mechanism might be required for such scenarios. This has been studied in recent work \cite{peters2020inference}. We plan to extend our work to account for multiple OLGNEs however, it is beyond the scope of the current work.

In the following sections, we characterize conditions under which a constrained dynamic game is a WCPDG and show that a large number of realistic multi-agent interactions can be modeled as potential dynamic games.

\section{Dyadic Interactions}\label{sec:dyadic-interactions}
In this section, we consider dyadic interactions, which involve interactions between two agents in a non-cooperative setting. We will show that under certain conditions, the underlying dynamic game of two-agent interactions is always a WCPDG. Therefore, optimizing the potential function can efficiently solve the resulting game. We formalize these conditions in this section.

In this case, we denote the agents by indices 1 and 2. For the two agent case, the state and control input of the system at time $k$ will be $x_k = (x^{1^\top}_k,x^{2^\top}_k)^\top$ and $u_k = (u^{1^\top}_k, u^{2^\top}_k)^\top$. Let the stage-wise cost of both agents be given by
\begin{subequations}\label{eq:dya_cost_1}
\begin{align}\label{eq:dya_cost_1a}
    L^1_k(x_k,u_k) & = L^{11}_k(x^1_k, u^1_k) + c^1_kL^{12}_k(x^1_k,x^2_k) \\
    L^2_k(x_k, u_k) & = L^{22}_k(x^2_k, u^2_k) + c^2_kL^{21}_k(x^2_k,x^1_k), \label{eq:dya_cost_1b} 
\end{align}
\end{subequations}
for $0\leq k \leq T-1$, and 
\begin{subequations}\label{eq:dya_cost_2}
    \begin{align}
    L^1_T(x_T) & = L^{11}_T(x^1_T) + c^1_TL^{12}_T(x^1_T,x^2_T) \label{eq:dya_cost_2a} \\
    L^2_T(x_T) & = L^{22}_T(x^2_T) + c^2_TL^{21}_T(x^2_T,x^1_T). \label{eq:dya_cost_2b},
\end{align}
\end{subequations}
for the final time step $k=T$.
Note that $c^1_k>0, c^2_k > 0, k \in \{0,1,\ldots, T\}$ are hyper parameters in the agents' cost functions. Intuitively, for each agent, the stage-wise cost is a sum of agent-specific cost ($L^{ii}_k(\cdot)$) and the inter-agent cost ($L^{ij}_k(\cdot)$). We assume that inter-agent costs do not depend on agents' actions. Agent-specific cost terms can be considered as terms that capture costs such as goal reaching, accomplishing certain individual tasks, satisfying individual preferences, avoiding static obstacles, controlling effort penalties, etc. Inter-agent costs are cost terms that capture the coupling between the two agents, such as preferences for staying out of close proximity of the other agent. It should be noted that even though we have collision avoidance constraints encoded in $g(\cdot,\cdot)$, we observe that different agents have different preferences for how close they get to the other agents in the environment.
We consider the following assumption about the inter-agent costs that capture the coupling among the agents.
\begin{assumption}\label{assum-symm-cost} For every time step $ 1\leq k \leq T$, we assume that $L^{12}_k(x^1_k,x^2_k) = L^{21}_k(x^2_k,x^1_k)$ for every tuple of states $(x^1_k, x^2_k)$. 
\end{assumption}
Assumption~\ref{assum-symm-cost} states that the functional form of the terms of the inter-agent costs is the same for both agents. This is a valid assumption as we have coefficients $c^1_k$ and $c^2_k$ to capture the asymmetries between the agents on how they want to consider the inter-agent costs. For example, suppose the inter-agent cost term captures the proximity preference cost between the two agents. In that case, generally, such costs are captured by distance functions, which are the same for all the agents. However, if one agent cares more about avoiding close proximity than the other agent, that agent will have a higher weight on the proximity preference cost than the other.

We present the following lemma, which states that such dyadic interactions are always WCPDGs.

\begin{lemma}\label{lemma-3}
A dyadic interaction with stage wise costs~\eqref{eq:dya_cost_1} and \eqref{eq:dya_cost_2} is always a WCPDG with weights $w^1_k = \frac{1}{c_k^2}$ and $w^2_k = \frac{1}{c_k^1}$ if Assumption \ref{assum-symm-cost} holds. Furthermore, the corresponding potential function is $P(x,u) =L_T(x_T) + \sum_{k=0}^{T-1}L_k(x_k,u_k)$ where
\begin{align}
    L_k(x_k,u_k) & = c_k^2L^{11}_k(x^1_k,u^1_k) + c_k^1L^{22}_k(x^2_k,u^2_k) \nonumber \\
    & \quad + c_k^1c_k^2L^{12}_k(x^1_k,x^2_k),
\end{align}
and
\begin{align}
    L_T(x_T) = c_T^2L^{11}_T(x^1_T) + c_T^1L^{22}_T(x^2_T) + c_T^1c_T^2L^{12}_T(x^1_T,x^2_T).
\end{align}

\begin{proof}
From \eqref{eq:dya_cost_1a} we have that,
\begin{align}
    L^1_k(x_k,u_k) & = L^{11}_k(x^1_k, u^1_k) + c_k^1L^{12}_k(x^1_k,x^2_k) \nonumber \\
    & = \frac{1}{c_k^2}\left( c_k^2L^{11}_k(x^1_k, u^1_k) + c_k^1c_k^2L^{12}_k(x^1_k,x^2_k) \right) \nonumber \\
    & = \frac{1}{c_k^2}L_k(x_k,u_k) - \frac{c_k^1}{c_k^2}L_k^{22}(x^2_k,u^2_k) \nonumber \\
    & = w^1_kL_k(x_k,u_k) + \Theta_k^1(x^2_k,u^2_k),
\end{align}
where $w^1_k = \frac{1}{c_k^2}$ and $\Theta_k^1(x^2_k,u^2_k) = - \frac{c_k^1}{c_k^2}L_k^{22}(x^2_k,u^2_k)$. Similarly, from Assumption \ref{assum-symm-cost} we can write,
\begin{align}
    L^2(x_k,u_k) &= w^2_kL_k(x_k,u_k) + \Theta_k^2(x^1_k,u^1_k),
\end{align}
where, $w^2_k= \frac{1}{c_k^2}$, $\Theta_k^2(x^1_k,u^1_k) = - \frac{c_k^2}{c_k^1}L_k^{11}(x^1_k,u^1_k)$. 
Likewise, for the final time step, T, we have
\begin{align}
    L^1_T(x_T) &= w^1_TL_T(x_T) + \Theta^1_T(x^2_T) \\
    L^2_T(x_T) &= w^2_TL_T(x_T) + \Theta^2_T(x^1_T),
\end{align}
where $ \Theta^1_T(x^2_T) = -\frac{c_T^1}{c_T^2}L^{22}_T(x^2_T)$, and $\Theta^2_T(x^1_T) = -\frac{c_T^2}{c_T^1}L^{11}_T(x^1_T)$. Therefore, applying Lemma~\ref{lemma-2}, we have that such a dyadic interaction will be a WCPDG.
\end{proof}

\end{lemma}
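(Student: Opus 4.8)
The plan is to avoid constructing a potential from scratch and instead verify the decomposition criterion of Lemma~\ref{lemma-2} directly for the candidate potential supplied in the statement. Lemma~\ref{lemma-2} has already reduced "is a WCPDG" to the purely algebraic requirement that each agent's stage and terminal cost split as $L^i_k = w^i_k L_k + \Theta^i_k$ with $\Theta^i_k$ independent of $(x^i_k,u^i_k)$. So the whole task is to exhibit, for each of the two agents, an admissible residual $\Theta^i_k$ relative to the given $L_k$ and the claimed weights $w^1_k = 1/c^2_k$, $w^2_k = 1/c^1_k$. (One could alternatively check the partial-derivative conditions \eqref{eq1:lemma1}--\eqref{eq2:lemma1} via Lemma~\ref{lemma-1}, but the decomposition route through Lemma~\ref{lemma-2} is cleaner here because the decomposition will turn out to be an exact identity in the variables, not merely along trajectories.)

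First I would treat agent~1. Substituting $L_k = c^2_k L^{11}_k + c^1_k L^{22}_k + c^1_k c^2_k L^{12}_k$ and scaling by $w^1_k = 1/c^2_k$, the own-cost term $L^{11}_k$ and the coupling term $c^1_k L^{12}_k$ appearing in $L^1_k$ from \eqref{eq:dya_cost_1a} cancel exactly against the matching terms of $w^1_k L_k$, leaving $\Theta^1_k = -(c^1_k/c^2_k)\,L^{22}_k(x^2_k,u^2_k)$. Because this residual involves only agent~2's state and action, it is admissible in the sense of \eqref{eq:lemma2}, and since the cancellation is an identity in all the arguments it holds in particular along every feasible trajectory.

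The substantive step, and essentially the only place where the hypotheses earn their keep, is agent~2. Agent~2's coupling term in \eqref{eq:dya_cost_1b} is $c^2_k L^{21}_k(x^2_k,x^1_k)$, whereas the potential carries only $L^{12}_k(x^1_k,x^2_k)$; these coincide \emph{as functions} precisely because Assumption~\ref{assum-symm-cost} identifies $L^{21}_k(x^2_k,x^1_k) = L^{12}_k(x^1_k,x^2_k)$. I expect this to be the hinge of the argument: without the symmetry assumption the two coupling terms would not cancel and no residual independent of $(x^2_k,u^2_k)$ could exist, so the decomposition would fail. Once the identification is invoked, scaling by $w^2_k = 1/c^1_k$ yields $\Theta^2_k = -(c^2_k/c^1_k)\,L^{11}_k(x^1_k,u^1_k)$, which again depends only on the other agent's variables.

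Finally, the terminal costs \eqref{eq:dya_cost_2} share the identical algebraic form as the running costs, so I would run the same computation with $k$ replaced by $T$ and the action arguments suppressed, obtaining $\Theta^1_T = -(c^1_T/c^2_T)\,L^{22}_T(x^2_T)$ and $\Theta^2_T = -(c^2_T/c^1_T)\,L^{11}_T(x^1_T)$. Having produced admissible decompositions for both the running and terminal costs of both agents, I would then invoke Lemma~\ref{lemma-2} to conclude that the dyadic interaction is a WCPDG with the stated weights and potential, which completes the proof.
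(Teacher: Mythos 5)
Your proposal is correct and follows essentially the same route as the paper's own proof: exhibit the decomposition $L^i_k = w^i_k L_k + \Theta^i_k$ with the identical residuals $\Theta^1_k = -(c^1_k/c^2_k)L^{22}_k$ and $\Theta^2_k = -(c^2_k/c^1_k)L^{11}_k$ (and their terminal analogues), using Assumption~\ref{assum-symm-cost} to identify $L^{21}_k(x^2_k,x^1_k)$ with $L^{12}_k(x^1_k,x^2_k)$ for agent~2, then invoke Lemma~\ref{lemma-2}. If anything, your write-up is slightly cleaner, since it makes the role of the symmetry assumption explicit and uses the correct weight $w^2_k = 1/c^1_k$, where the paper's proof text contains a typo ($w^2_k = 1/c^2_k$) that contradicts its own lemma statement and residual.
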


As mentioned previously, this result extends the symmetry assumptions required for the inter-agent costs from our previous work to arbitrary values of cost coefficients in Lemma \ref{lemma-3} for dyadic interactions. This captures many real-world robotics interactions where there is an asymmetry in the way agents assign costs to each other. The following section discusses a more general version of the result in Lemma \ref{lemma-3} for multi-agent interactions.

\section{Multi-agent Interactions}\label{multi-agent-inter}\label{sec:multi-agent-interactions}
In this section, we generalize our results from the previous section and go beyond two agents. We characterize conditions under which the interaction of more than two agents is a WCPDG.

Consider the interactions of $N$ agents in an environment. For every agent $i \in \mathcal{N} $, let the stage-wise and terminal costs be of the form,
\begin{subequations}
\begin{align}\label{eq:multi_1_eq_1a}
    L^i_k(x_k,u_k) & = L^{ii}_k(x^i_k, u^i_k) + \sum_{\substack{j=1 \\
    j\neq i}}^{N}c_k^{ij}L^{ij}_k(x^i_k,x^j_k),
\end{align}
and
\begin{align}\label{eq:multi_1_eq_1b}
    L^i_T(x_T) & = L^{ii}_T(x^i_T) + \sum_{\substack{j=1 \\
    j\neq i}}^{N}c_k^{ij}L^{ij}_T(x^i_T,x^j_T).
\end{align}
\end{subequations} We also assume that $c_k^{ij} > 0$ for all $i,j \in \mathcal{N}$ and for all $k \in \{0,1,\ldots,T\}$. Intuitively, the stage-wise cost for each agent is the sum of a term that depends on the agent's own state and actions $L^{ii}_k(x^i_k, u^i_k)$ and the sum of terms that capture the pairwise interaction of the agent $i$ with other agents $j$, $L^{ij}_k(x^i_k,x^j_k)$. As mentioned earlier, the first term can be thought of as a term that captures the individual preferences of the agent while the coupling between the agents is captured through the other terms. In the following lemmas, we show that under certain conditions, such multi-agent interactions become WCPDGs. Same as before, we make an additional assumption that inter-agent cost terms have the same functional form among the agents.
\begin{assumption}\label{assum-sym-cost-2}
    Assume that for any two agents $i,j \in \mathcal{N}, i\neq j$, and every time step $1 \leq k\leq T-1$, we have that $L^{ij}_k(x^i_k,x^j_k) = L^{ji}_k(x^j_k,x^i_k)$ for any $x_k^i \in \mathcal{X}^i$ and $x_k^j \in \mathcal{X}^j$., Likewise, for the final time step, we have $L^{ij}_T(x^i_T,x^j_T) = L^{ji}_T(x^j_T,x^i_T)$ for any set of final states $x^i_T \in \mathcal{X}^i$ and $x^j_T  \in \mathcal{X}^j$.
\end{assumption}
Similar to the previous section, the above assumption requires that the functional form of the couplings is the same among the agents, and we capture the asymmetries between the agents through the inter-agent cost coefficients ($c^{ij}_k$).

In order to better explain our ideas, we consider a running example of three agents interacting with each other in an environment. Fig. \ref{fig:schematic-1} provides a schematic figure to represent this interaction. Each agent has two types of costs: agent-specific and inter-agent costs. The inter-agent costs are shown on directed arrows, which show the cost a particular agent incurs from the other agent. For example, the arrow from agent 1 to agent 2 captures the cost $c^{12}_kL^{12}_k(x^1_k,x^2_k)$ that is the cost incurred to agent 1 from agent 2.
\begin{figure}
    \centering
    \includegraphics[scale=0.18]{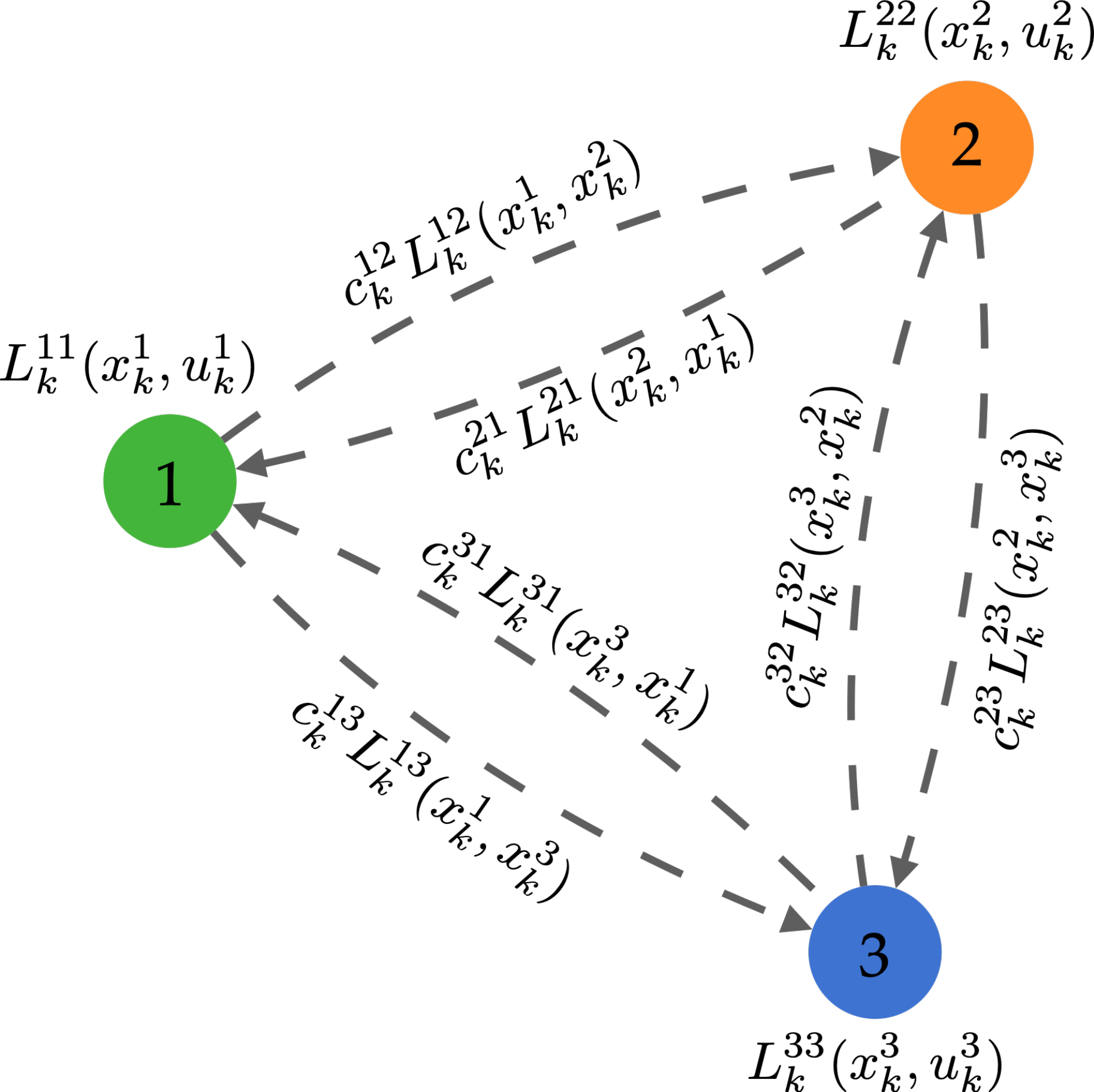}
    \caption{Schematic of the cost function dependencies for a running example with three agents interacting with one another.}
    \label{fig:schematic-1}
\end{figure}



Consider a scenario of multi-agent interactions where each agent incurred a cost for getting in close proximity with all the other agents in the environment with the same coefficient, i.e., $c^{ij} = c^{i}$. This means that each agent has the same inter-agent cost coefficient for all the other agents in the environment, i.e., \emph{each agent treats all the other agents similarly}. The schematic of such an interaction is shown in Fig.~\ref{fig:schematic-2}.
In the following lemma, we show that such an interaction becomes an instance of a $w$-potential game and, therefore, we can associate a potential function with it, minimizing which results in fast and efficient trajectory planning.
\begin{lemma}\label{lemma-4} Under Assumption \ref{assum-sym-cost-2}, a multi-agent interaction game $\mathcal{G} := \left( \mathcal{N}, \{\Gamma_i\}_{i\in\mathcal{N}}, \{J_i\}_{i\in\mathcal{N}},\{\mathcal{C}_k\}_{k\in\{0,\ldots,T\}},f,x_0 \right)$ with stage-wise and terminal costs~\eqref{eq:multi_1_eq_1a} and \eqref{eq:multi_1_eq_1b} is a $w$-potential game with weights 
\begin{equation}
    w^i_k= \frac{1}{\Pi_{\substack{j=1 \\ j\neq i}}^N c_k^j },
\end{equation} 
if for every agent $i$ in $\mathcal{N}$, we have $c_k^{ij} = c_k^i$ for all $j \in \mathcal{N}\setminus\{i\}$. The corresponding potential function of the game is $P(x,u) =L_T(x_T) + \sum_{k=0}^{T-1}L_k(x_k,u_k)$ where
\begin{align}
    L_k(x_k,u_k) & = \sum_{i=1}^N (\Pi_{\substack{j=1 \\ j\neq i}}^N c_k^j ) L^{ii}_k(x^i_k, u^i_k) \nonumber \\
    & \quad + (\Pi_{\substack{l=1}}^N c_k^l) \sum_{i=1}^{N-1} \sum_{j=i+1}^{N} L^{ij}_k(x^i_k,x^j_k), 
\end{align}
and
\begin{align}
    L_T(x_T) & = \sum_{i=1}^N (\Pi_{\substack{j=1 \\ j\neq i}}^N c_k^j ) L^{ii}_T(x^i_T) \nonumber \\
    & \quad + (\Pi_{\substack{l=1}}^N c_k^l) \sum_{i=1}^{N-1} \sum_{j=i+1}^{N} L^{ij}_T(x^i_T,x^j_T). 
\end{align}
\end{lemma}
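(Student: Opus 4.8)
The plan is to reduce the claim to the decomposition criterion of Lemma~\ref{lemma-2}, exactly as was done for the dyadic case in Lemma~\ref{lemma-3}, but now carefully tracking the product coefficients. It suffices to verify that, with the stated weights $w^i_k$ and the stated potential stage cost $L_k$, the difference $L^i_k - w^i_k L_k$ depends on neither $x^i_k$ nor $u^i_k$ for every agent $i$ and every $k$, and likewise that $L^i_T - w^i_T L_T$ is independent of $x^i_T$. The residual then serves as the required $\Theta^i_k$ (respectively $\Theta^i_T$) that depends only on $x^{-i}_k, u^{-i}_k$.

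First I would isolate, inside $w^i_k L_k$, all the terms that carry the variables of agent $i$. There are two such families: the self-cost $L^{ii}_k(x^i_k,u^i_k)$, which enters $L_k$ with coefficient $\Pi_{\substack{j=1\\j\neq i}}^N c_k^j$, and the pairwise couplings $L^{ij}_k(x^i_k,x^j_k)$ for $j\neq i$, each appearing once in the double sum (as one of $L^{ij}_k$ or $L^{ji}_k$, identified via Assumption~\ref{assum-sym-cost-2}) with coefficient $\Pi_{l=1}^N c_k^l$. Multiplying by $w^i_k = 1/\Pi_{\substack{j=1\\j\neq i}}^N c_k^j$ then yields the two combinatorial identities
\begin{align}
w^i_k\,\Pi_{\substack{j=1\\j\neq i}}^N c_k^j &= 1, \nonumber \\
w^i_k\,\Pi_{l=1}^N c_k^l &= c_k^i, \nonumber
\end{align}
so that in $w^i_k L_k$ the coefficient of $L^{ii}_k$ is $1$ and the coefficient of each $L^{ij}_k$ is $c_k^i$.

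Next I would compare these against $L^i_k$ in~\eqref{eq:multi_1_eq_1a}. Under the hypothesis $c_k^{ij}=c_k^i$, agent $i$'s own cost is precisely $L^{ii}_k$ with coefficient $1$ plus $\sum_{j\neq i} c_k^i L^{ij}_k$, which are exactly the agent-$i$-dependent terms recovered above. Hence every occurrence of $x^i_k$ and $u^i_k$ cancels in $L^i_k - w^i_k L_k$, and the remainder consists solely of the self-costs $L^{ll}_k$ with $l\neq i$ and the couplings $L^{pq}_k$ with $p,q\neq i$, each scaled by a known constant. This remainder is a function of $x^{-i}_k,u^{-i}_k$ only and is taken as $\Theta^i_k$. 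The terminal identity follows from the identical argument applied to~\eqref{eq:multi_1_eq_1b}, giving $\Theta^i_T(x^{-i}_T)$. Invoking Lemma~\ref{lemma-2} then completes the proof.

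The main obstacle is purely the coefficient bookkeeping: one must confirm that each unordered pair $\{i,j\}$ contributes a single coupling term, and that Assumption~\ref{assum-sym-cost-2} legitimately merges $L^{ij}_k$ and $L^{ji}_k$ so that the coefficient seen by agent $i$ is $c_k^i$ rather than some mixture of $c_k^i$ and $c_k^j$. Once the two product identities above are in hand, the cancellation is immediate and mirrors the $N=2$ computation in Lemma~\ref{lemma-3}.
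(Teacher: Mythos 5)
Your proposal is correct and takes essentially the same route as the paper's own proof: the paper likewise rewrites $L^i_k$ as $w^i_k L_k(x_k,u_k) + \Theta^i_k(x_k^{-i},u_k^{-i})$ by factoring out $1/\Pi_{\substack{j\neq i}} c_k^j$ (your two product identities), uses Assumption~\ref{assum-sym-cost-2} to merge $L^{ij}_k$ and $L^{ji}_k$ so each unordered pair contributes a single coupling term with coefficient $c_k^i$ seen by agent $i$, treats the terminal cost identically, and concludes via Lemma~\ref{lemma-2}.
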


\begin{figure}
    \centering
    \includegraphics[scale=0.18]{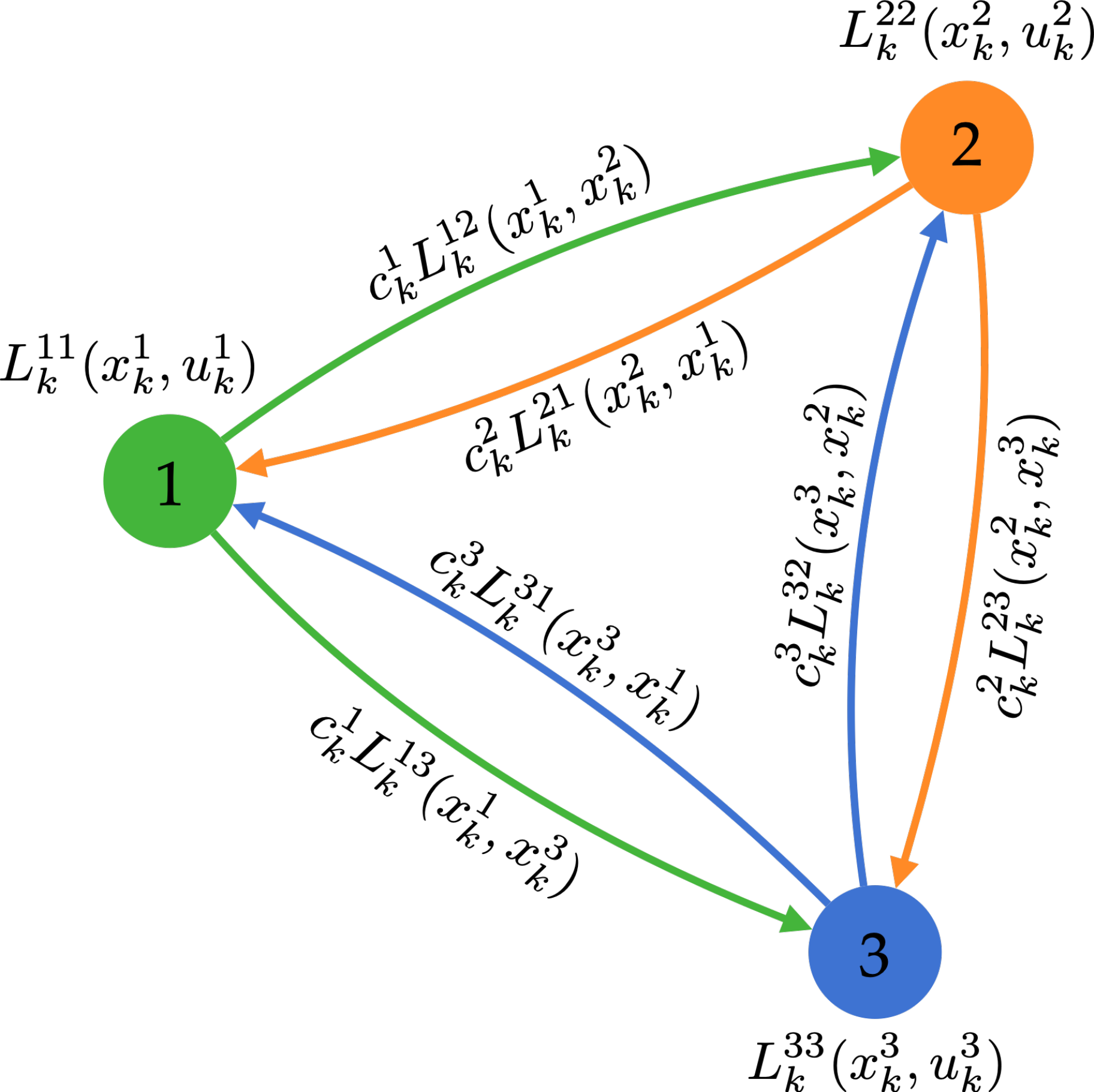}
    \caption{Schematic of an interaction where each agent in the environment treats all the other agents in a similar fashion. In such scenarios, the underlying dynamic game becomes a weighted potential dynamic game. Note that in this figure, the edges with the same color indicate the same value of the cost coefficient.}
    \label{fig:schematic-2}
\end{figure}

\begin{proof}
From \eqref{eq:multi_1_eq_1a} and Assumption \ref{assum-sym-cost-2}, we have that
\begin{align}
    L^i_k(x_k,u_k) & = L^{ii}_k(x^i_k, u^i_k) + c_k^i\sum_{\substack{j=1 \\
    j\neq i}}^{N}L^{ij}_k(x^i_k,x^j_k) \nonumber \\
    & = \frac{1}{\Pi_{\substack{j=1 \\ j\neq i}}^N c_k^j } \Bigg[ (\Pi_{\substack{j=1 \\ j\neq i}}^N c_k^j )L^{ii}_k(x^i_k, u^i_k) \Bigg . \nonumber \\ & \qquad \qquad + \Bigg. (\Pi_{\substack{l=1}}^N c_k^l) \sum_{\substack{j=1 \\
    j\neq i}}^{N}L^{ij}_k(x^i_k,x^j_k) \Bigg]. 
\end{align}
Therefore,
\begin{align}
    L^i_k(x_k,u_k) & = w^iL_k(x_k,u_k) - w^i\sum_{\substack{j=1 \\ j\neq i}}^N(\Pi_{\substack{l=1 \\ l\neq j}}^N c_k^l) L^{jj}_k(x^j_k, u^j_k) \nonumber \\
    & \quad - w^i(\Pi_{\substack{l=1}}^N c_k^l) \sum_{\substack{j=1 \\ j\neq i}}^{N-1} \sum_{\substack{l=j+1 \\ l\neq i}}^{N} L^{jl}_k(x^j_k,x^l_k) \nonumber \\
    & = w^iL_k(x_k,u_k) + \Theta^i_k(x_k^{-i},u_k^{-i}),
\end{align}
where $\Theta^i_k(x_k^{-i},u_k^{-i}) = - w^i\sum_{\substack{j=1 \\ j\neq i}}^N(\Pi_{\substack{l=1 \\ l\neq j}}^N c_k^l) L^{jj}_k(x^j_k, u^j_k) - w^i(\Pi_{\substack{l=1}}^N c_k^l) \sum_{\substack{j=1 \\ j\neq i}}^{N-1} \sum_{\substack{l=j+1 \\ l\neq i}}^{N} L^{jl}_T(x^j_T,x^l_T)$. Similarly, we can write
\begin{align}
    L^i_T(x_T) = w^i(L_T(x_{T})) + \Theta^i_T(x^{-i}_{T}),
\end{align}
where $ \Theta^i_T(x^{-i}_{T}) = - w^i\sum_{\substack{j=1 \\ j\neq i}}^N(\Pi_{\substack{l=1 \\ l\neq j}}^N c_k^l) L^{jj}_T(x^j_T) - w^i(\Pi_{\substack{l=1}}^N c_k^l) \sum_{\substack{j=1 \\ j\neq i}}^{N-1} \sum_{\substack{l=j+1 \\ l\neq i}}^{N} L^{jl}_T(x^j_T,x^l_T)$. Therefore, by applying Lemma~\ref{lemma-2}, we have that such a multi-agent interaction is a WCPDG.
\end{proof}

In the following, we consider another practical case that is of relevance to many real-world applications. Consider a scenario of multi-agent interactions where each agent is penalized similarly by all the other agents in the environment. This means that each agent is treated with the same inter-agent cost coefficient by all the other agents in the environment, i.e., $c^{ij} = c^{j}$ for all $i \in \mathcal{N}\setminus\{j\}$. The schematic of this type of interaction is shown in Fig.~\ref{fig:schematic-3}.
\begin{figure}
    \centering
    \includegraphics[scale=0.18]{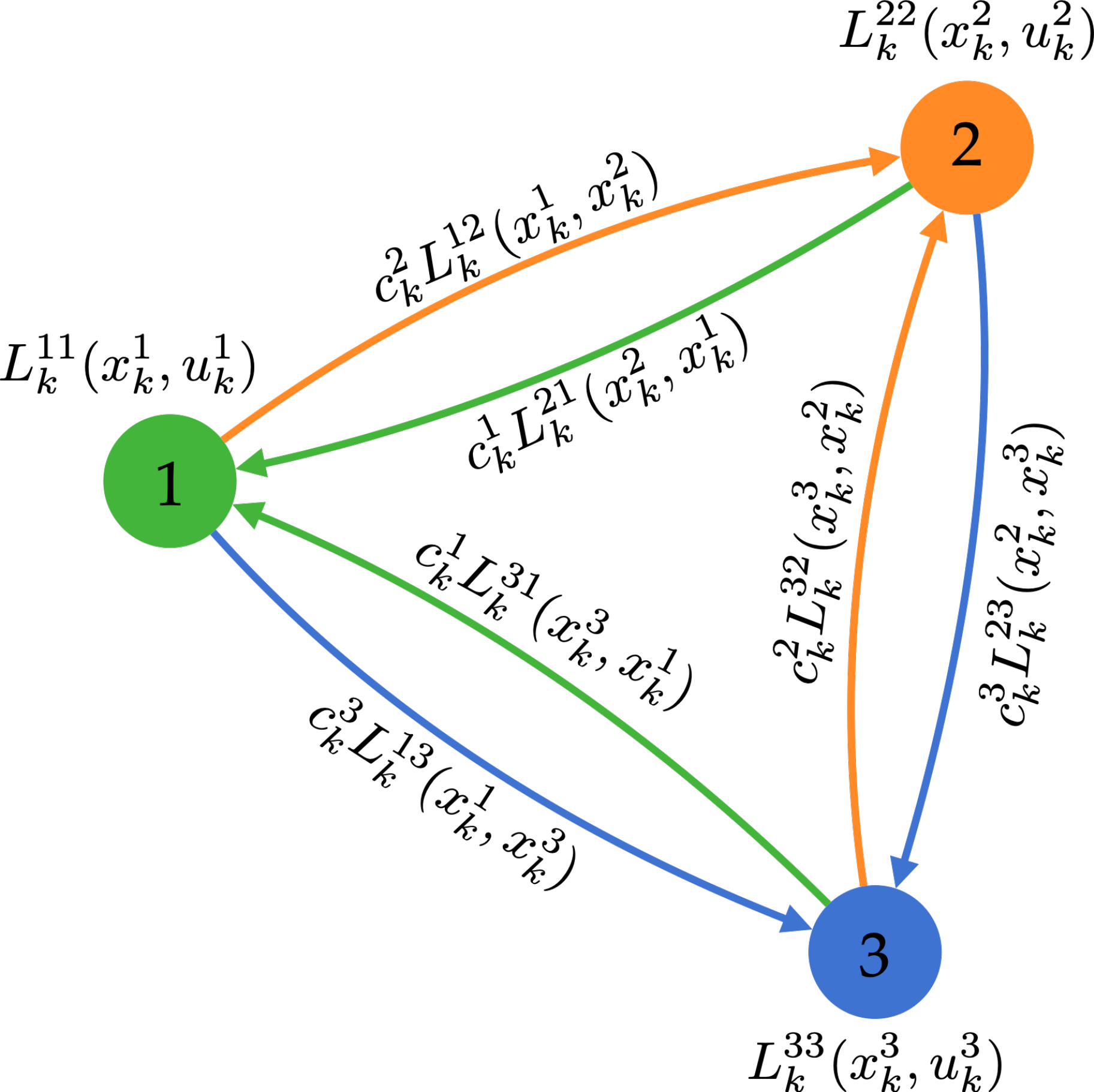}
    \caption{Schematic of interactions where each agent in the environment is treated by all the other agents in a similar fashion. In such scenarios, the underlying dynamic game becomes a weighted potential dynamic game. It should be noted that edges with the same color indicate the same value of the cost coefficient.}
    \label{fig:schematic-3}
\end{figure}

In the following lemma, we show that such an interaction also becomes an instance of a $w$-potential game; therefore, we can associate a potential function with it.




\begin{lemma}\label{lemma-5} Under Assumption \ref{assum-sym-cost-2},
a multi-agent interaction game $\mathcal{G} := \left( \mathcal{N}, \{\Gamma_i\}_{i\in\mathcal{N}}, \{J_i\}_{i\in\mathcal{N}},\{\mathcal{C}_k\}_{k\in\{0,\ldots,T\}},f,x_0 \right)$ with stage-wise and terminal costs according to \eqref{eq:multi_1_eq_1a} and \eqref{eq:multi_1_eq_1b} is a $w$-potential game with weights $w^i= \frac{1}{c_k^i}$ if $c_k^{ij} = c_k^j$ for every $i \in \mathcal{N}\setminus\{j\}$. The corresponding potential function is $P(x,u) =L_T(x_T) + \sum_{k=0}^{T-1}L_k(x_k,u_k)$ where
\begin{align}
    L_k(x_k,u_k) & = \sum_{i=1}^N c_k^iL^{ii}_k(x^i_k, u^i_k) + \sum_{i=1}^{N-1} \sum_{j=i+1}^{N} c_k^ic_k^j L^{ij}_k(x^i_k,x^j_k),
\end{align}
and
\begin{align}
    L_T(x_T) & = \sum_{i=1}^N c_k^i L^{ii}_T(x^i_T) + \sum_{i=1}^{N-1} \sum_{j=i+1}^{N} c_k^i c_k^j L^{ij}_T(x^i_T,x^j_T) .
\end{align}
\end{lemma}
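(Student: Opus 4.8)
The plan is to verify the hypotheses of Lemma~\ref{lemma-2} and then invoke it directly, exactly as in the proof of Lemma~\ref{lemma-4}. That is, for the stated potential $P$ and weights $w^i_k = 1/c^i_k$ (I read the $w^i$ in the statement as $w^i_k$, since $P$ carries $k$-dependent coefficients), I would show that each agent's running cost splits as $L^i_k(x_k,u_k) = w^i_k L_k(x_k,u_k) + \Theta^i_k(x^{-i}_k, u^{-i}_k)$ and each terminal cost as $L^i_T(x_T) = w^i_T L_T(x_T) + \Theta^i_T(x^{-i}_T)$, with the residual terms $\Theta^i$ depending neither on $x^i$ nor on $u^i$. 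Once these decompositions are in hand, Lemma~\ref{lemma-2} immediately certifies that $\mathcal{G}$ is a WCPDG with the claimed weights, which is all that is asked.

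To produce the decomposition, I would first substitute the hypothesis $c^{ij}_k = c^j_k$ into \eqref{eq:multi_1_eq_1a}, giving $L^i_k = L^{ii}_k(x^i_k,u^i_k) + \sum_{j\neq i} c^j_k L^{ij}_k(x^i_k,x^j_k)$. I would then expand $w^i_k L_k = (1/c^i_k) L_k$ and isolate every term that involves agent $i$'s own state $x^i_k$ or action $u^i_k$. The diagonal term $l=i$ in $\sum_l c^l_k L^{ll}_k$ contributes $(1/c^i_k)\,c^i_k L^{ii}_k = L^{ii}_k$, matching the self-cost of agent $i$ exactly.

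The crux of the argument, and the step I expect to require the most care, is the bookkeeping for the symmetric double sum $\sum_{l<m} c^l_k c^m_k L^{lm}_k$. Agent $i$ appears there in two distinct ways: as the lower index ($l=i$, $m>i$) and as the upper index ($m=i$, $l<i$). After multiplying by $1/c^i_k$, these two families contribute $\sum_{m>i} c^m_k L^{im}_k(x^i_k,x^m_k)$ and $\sum_{l<i} c^l_k L^{li}_k(x^l_k,x^i_k)$, respectively. Here I would invoke Assumption~\ref{assum-sym-cost-2} to rewrite $L^{li}_k(x^l_k,x^i_k) = L^{il}_k(x^i_k,x^l_k)$, so the two families merge into the single sum $\sum_{j\neq i} c^j_k L^{ij}_k(x^i_k,x^j_k)$, which is precisely the coupling term of $L^i_k$. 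Getting this index manipulation and the symmetry substitution right is the only nontrivial point; everything else is collecting the leftover terms, namely the off-$i$ diagonal costs $(1/c^i_k)\sum_{l\neq i} c^l_k L^{ll}_k$ and the couplings $L^{lm}_k$ with $l,m\neq i$, into $\Theta^i_k(x^{-i}_k,u^{-i}_k)$, which manifestly depends on neither $x^i_k$ nor $u^i_k$.

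Finally, I would repeat the identical computation for the terminal stage using \eqref{eq:multi_1_eq_1b} and the terminal half of Assumption~\ref{assum-sym-cost-2}, obtaining $L^i_T = w^i_T L_T + \Theta^i_T(x^{-i}_T)$ in the same fashion. With both decompositions established, applying Lemma~\ref{lemma-2} closes the proof.
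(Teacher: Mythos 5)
Your proposal follows essentially the same route as the paper's own proof: substitute $c^{ij}_k = c^j_k$ into \eqref{eq:multi_1_eq_1a}, factor out $1/c^i_k$ from the stated potential, use Assumption~\ref{assum-sym-cost-2} to merge the two halves of the symmetric double sum into $\sum_{j\neq i} c^j_k L^{ij}_k(x^i_k,x^j_k)$, collect the remaining terms into $\Theta^i_k(x^{-i}_k,u^{-i}_k)$ (and likewise at the terminal stage), and invoke Lemma~\ref{lemma-2}. Your explicit index bookkeeping is correct and in fact slightly more careful than the paper's, whose displayed $\Theta^i$ expressions drop the coefficients $c^j_k c^l_k$ on the leftover coupling terms and carry a sign typo in $\Theta^i_T$; substantively, though, the two arguments are identical.
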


\begin{proof}
From \eqref{eq:multi_1_eq_1a} and Assumption \ref{assum-sym-cost-2}, we have that
\begin{align}
    L^i_k(x_k,u_k) & = L^{ii}_k(x^i_k, u^i_k) + \sum_{\substack{j=1 \\
    j\neq i}}^{N} c_k^j L^{ij}_k(x^i_k,x^j_k) \nonumber \\
    & = \frac{1}{c_k^i}\Bigg\{ c_k^i L^{ii}_k(x^i_k, u^i_k) + \sum_{\substack{j=1 \\
    j\neq i}}^{N} c_k^ic_k^j L^{ij}_k(x^i_k,x^j_k)\Bigg\}. 
\end{align}
Therefore,
\begin{align}
    L^i_k(x_k,u_k) & = w^iL_k(x_k,u_k) - \frac{1}{c_k^i}\Bigg\{ \sum_{\substack{j=1 \\ j\neq i}}^N L^{jj}_k(x_k,u_k) \Bigg. \nonumber \\
    & \quad \quad \Bigg. + \sum_{\substack{j=1 \\ j\neq i}}^{N-1} \sum_{\substack{l=j+1 \\ l\neq i}}^{N} L^{jl}_k(x^j_k,x^l_k) \Bigg \} \nonumber \\
    & = w^iL_k(x_k,u_k) + \Theta^i_k(x_k^{-i},u_k^{-i}),
\end{align}
where $\Theta^i_k(x_k^{-i},u_k^{-i}) = -\frac{1}{c_k^i}\{ \sum_{\substack{j=1 \\ j\neq i}}^N L^{jj}_k(x_k,u_k) + \sum_{\substack{j=1 \\ j\neq i}}^{N-1} \sum_{\substack{l=j+1 \\ l\neq i}}^{N} L^{jl}_k(x^j_k,x^l_k) \}$. Similarly, we can write
\begin{align}
    L^i_T(x_T) = w^i(L_T(x_{T})) + \Theta^i_T(x^{-i}_{T}),
\end{align}
where $ \Theta^i_T(x_T^{-i}) = \frac{1}{c_k^i}\{ \sum_{\substack{j=1 \\ j\neq i}}^N L^{jj}_T(x_T) + \sum_{\substack{j=1 \\ j\neq i}}^{N-1} \sum_{\substack{l=j+1 \\ l\neq i}}^{N} L^{jl}_T(x^j_T,x^l_T) \}$. Therefore, applying Lemma-\ref{lemma-2}, we have that such a multi-agent interaction will be a WCPDG.
\end{proof}

\begin{remark}
    Note that in this formulation, all agents are not necessarily required to interact with all other agents in the environment. For example, if agent $i$ and agent $j$ do not interact with each other, then the corresponding inter-agent cost term $L^{ij}_k(x^i_k,x^j_k)$ will be zero, and; therefore, that term won't appear in the potential function.
\end{remark}

Lemmas \ref{lemma-3}, \ref{lemma-4}, and \ref{lemma-5} provide realistic motion planning scenarios under which a game $\mathcal{G} := \left( \mathcal{N}, \{\Gamma_i\}_{i\in\mathcal{N}}, \{J_i\}_{i\in\mathcal{N}},\{\mathcal{C}_k\}_{k\in\{0,\ldots,T\}},f,x_0 \right)$ is a WCPDG. Therefore, we can use the result from Theorem \ref{thm-1} and associate a single constrained optimal control problem as described in \eqref{mopc} for finding equilibria of the game. If there are no constraints in the game, then solving \eqref{mopc} is an easy task as it can be solved using any standard optimization techniques of single agent optimal control. In particular, in unconstrained settings, we employ the Iterative Linear Quadratic Regulator (iLQR) \cite{li2004iterative} to solve the single agent optimal control problem. As the name suggests, the iterative linear quadratic regulator (iLQR) is an algorithm that iteratively refines both the control inputs and the value function estimates to improve the control policy. 

Furthermore, in constrained settings, to solve~\eqref{mopc}, we use the Augmented Lagrangian Trajectory Optimization (ALTRO) algorithm~\cite{howell2019altro}. ALTRO combines iterative-LQR (iLQR)~\cite{li2004iterative} with an augmented Lagrangian method to handle general state and input constraints and an active-set projection method for final ``solution polishing'' to achieve fast and robust convergence.

\section{Simulations Studies}\label{sec:simulations}
In this section, we provide the simulation results for our algorithm. First, we provide an analysis of two-player LQ games to show that our algorithm indeed recovers the open-loop Nash equilibrium of the original game. Then, we provide analysis for general nonlinear games to showcase the capabilities of our algorithm in realistic scenarios. Furthermore, we show through Monte Carlo analysis that our algorithm is faster than state-of-the-art methods.

\subsection{Dyadic Linear Quadratic Interactions}\label{lq-sim}

The main result of our work is to prove that we can compute the open-loop Nash equilibrium of a game by minimizing a single potential function if a game is WCPDG. In order to numerically verify this claim, we consider a two-agent Linear Quadratic Game (LQ Game). We choose this because, for linear quadratic games, exact closed-form solutions for computing open-loop Nash equilibria are available. LQ games are a special version of affine-quadratic games (Definition 6.1 in \cite{bacsar1998dynamic}) where the dynamics are entirely linear with no affine term:
\begin{definition}
A game is a linear quadratic (LQ Game) if the system dynamics are defined as
\begin{equation}
    x_{k+1} = f_k(x_k,u_k) = A_kx_k + B_ku_k,
\end{equation}
where $A_k$ is the state matrix and $B_k$ is the control matrix; and the stage-wise cost functions for every agent $i \in \mathcal{N}$ are \begin{align}
    L^i_k(x_k,u_k) = \begin{cases} \frac{1}{2}\left( \sum_{j\in \mathcal{N}} {u^j}^\top_k R^{ij}_k u^j_k \right), & k = 0 \\
    \frac{1}{2}\left( x_k^\top Q^i_k x_k + \sum_{j\in \mathcal{N}} {u^j}^\top_k R^{ij}_k u^j_k \right), & k \neq 0, T,
    \end{cases}
\end{align}
and
\begin{equation}
    L^i_T(x_T) = \frac{1}{2}x_T^\top Q^i_T x_T,
\end{equation}
where $Q^i$'s are symmetric positive semi-definite matrices, and $R^{ii}$'s are positive definite matrices.
\end{definition}

\begin{figure}
    \centering
    \includegraphics[width=\linewidth]{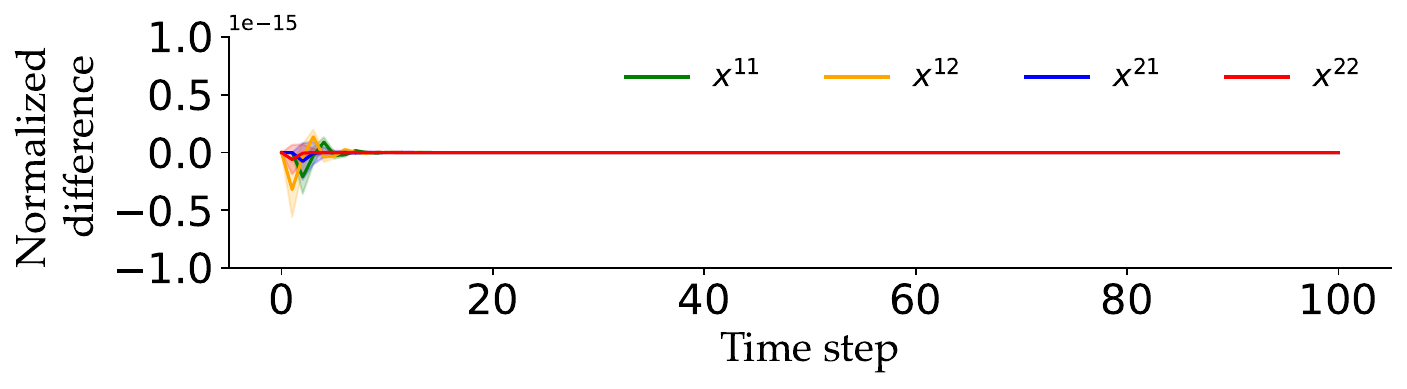}
    \caption{Comparison of trajectories for the LQ-game solved using potential minimization and exact open-loop Nash solution shows that the trajectories generated are close up to the system precision level.}
    \label{fig:pot_vs_ol}
\end{figure}

Now, we provide an example of a potential linear quadratic game and show how potential minimization recovers open-loop Nash equilibrium. Consider a two player LQ Game - $\mathcal{G}^{LQ}$ where $x^1 := (x^{{11}},x^{{12}})^\top$ and $x^2 := (x^{{21}},x^{{22}})^\top$ are state vector for both the agents with $x^{ij}$ being $j^{\text{th}}$ element of $i^{\text{th}}$ agent. We let the initial conditions of the game be $x^1_0 = (3,2)^\top$ and $x^2_0=(4,5)^\top$. Let the dynamics and cost matrices be
\begin{align}\label{eq:lqgame}
    & A_k = 
    \begin{bmatrix} 
        0 & 1 & 0 & 0 \\
        -1 & -1 & 0 & 0 \\
        0 & 0 & 0 & 1 \\
        0 & 0 & -1 & -1
    \end{bmatrix}, & B_k & = 
    \begin{bmatrix}
        0 & 0 \\
        1 & 0 \\
        0 & 0 \\
        0 & 1 \\
    \end{bmatrix} & \nonumber \\
    & Q^1_k =
    \begin{bmatrix}
        1 & -1 & 2 & 0 \\
        -1 & 5 & -1 & 1 \\
        2 & -1 & 6 & -2 \\
        0 & 1 & -2 & 4    
    \end{bmatrix}, & Q^2_k & =
    \begin{bmatrix}
        1 & -1 & 2 & 0 \\
        -1 & 4 & -1 & 1 \\
        2 & -1 & 6 & 0 \\
        0 & 1 & 0 & 2    
    \end{bmatrix} \nonumber \\
    & R^{11}_k = \begin{bmatrix}
        3
    \end{bmatrix}, R^{12}_k = 0, & 
    R^{21}_k & = 0, R^{12}_k = \begin{bmatrix}
        2
    \end{bmatrix}.
\end{align}

It is easy to verify using Lemma \ref{lemma-2} that $\mathcal{G}^{LQ}$ is a WCPDG with the weights $w_1 = w_2 = 1$ and  stage-wise potential functions 
\begin{align}
    L_k(x_k,u_k) = \begin{cases} \frac{1}{2}\left(  {u}^\top_k R_k u_k \right), & k = 0 \\
    \frac{1}{2}\left( x_k^\top Q_k x_k + {u}^\top_k R_k u_k \right), & k \neq 0, T,
    \end{cases}
\end{align}
and
\begin{equation}
    L_T(x_T) = \frac{1}{2}x_T^\top Q_T x_T,
\end{equation}
where, $R_k$ and $Q_k$ are
\begin{align}
     Q^1_k =
    \begin{bmatrix}
        1 & -1 & 2 & 0 \\
        -1 & 5 & -1 & 1 \\
        2 & -1 & 6 & 0 \\
        0 & 1 & 0 & 2    
    \end{bmatrix}, \quad & R_k = \begin{bmatrix}
        3 & 0 \\
        0 & 2
    \end{bmatrix}.
\end{align}


The open-loop Nash equilibria of the LQ game can be found using Theorem 6.2 from \cite{bacsar1998dynamic}. By formulating the game as a $w$-potential game, we need to just solve the resulting single-agent optimal control problem, which can be solved using Ricatti equations of standard discrete-time LQR. We consider 200 random initial conditions around $x^1_0$ and $x^2_0$. We plot the mean and standard deviation of the trajectories that result from potential function minimization and the exact open-loop trajectories of the system in Fig.~\ref{fig:pot_vs_ol}. As we can observe from Fig. \ref{fig:pot_vs_ol}, the trajectories obtained from Potential minimization and open-loop Nash equilibrium of the original game are close to the system precision level. This demonstrates that, indeed, we can recover the open-loop Nash equilibrium of the original game by potential function minimization for the case of $w$-potential game.

\begin{figure*}[th]
    \centering
    \includegraphics[scale = 0.67]{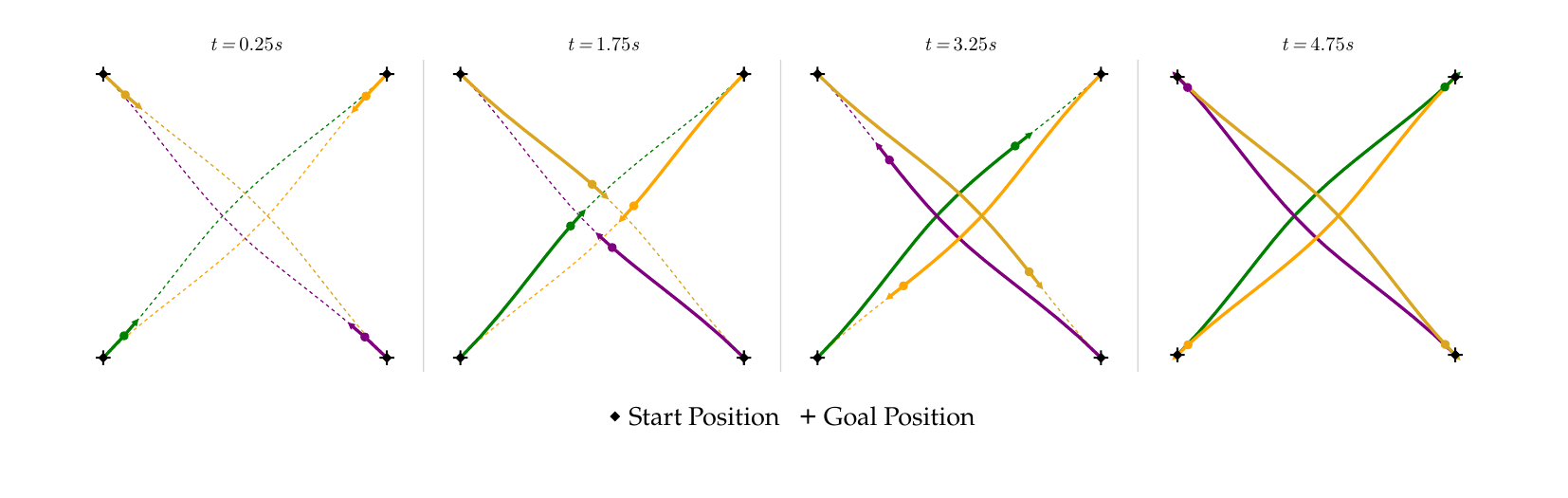}
    \vspace{-1.2cm}
    \caption{The snapshots of the trajectories found by our algorithm when four unicycle agents interact with each other and exchange their positions diagonally over a time interval of 5\emph{s}. Agents move from their start positions to their respective goal positions. In order to avoid collision with other agents, each agent deviates from its nominal trajectory and successfully reaches its goal position.}
    \label{fig:four_agents}
    \vspace{-0.7cm}
\end{figure*}

\subsection{Multi-agent Interactive Planning}\label{multi-sim}
In order to evaluate the performance of our method on systems with general nonlinear dynamics and costs, we consider an interaction among three agents in a planner navigation setting. Each agent wants to reach its goal position while avoiding collisions with other agents. We model each agent via discrete-time unicycle dynamics with a time step size of $\Delta t$:
\begin{align}\label{eq:unicycle_dynamics}
    & p^i_{k+1} = p^i_k + \Delta t \cdot v^i_k \cos{\theta^i_k}, \quad  q^i_{k+1} = q^i_k + \Delta t \cdot v^i_k \sin{\theta^i_k} \nonumber \\
    & \theta^i_{k+1} = \theta^i_k + \Delta t \cdot \omega^i_k, \quad v^i_{k+1} = v^i_{k} + \Delta t \cdot \alpha^i_k,
\end{align}
where $p^i_k$ and $q^i_k$ are $x$ and $y$ coordinates of the positions in the 2D plane, $\theta^i_k$ is the heading angle from positive x-axis, $v^i_k$ is the forward velocity, $\omega^i_k$ is the angular velocity, and $\alpha^i_k$ is the acceleration of agent $i$ at time step $k$. For each agent $i$, we let the state vector $x^i_k$ be $[p^i_k,q^i_k,\theta^i_k, v^i_k]$, and the action vector $u^i_k$ be $[\omega^i_k,\alpha^i_k]$. For each agent $i$, we consider costs of the form \eqref{eq:multi_1_eq_1a} and \eqref{eq:multi_1_eq_1b} with conditions from Lemma~\ref{lemma-4}. Here $L^{ii}(\cdot)$ consists of the distance to goal cost and control costs while $L^{ij}(\cdot)$ consists of collision avoidance costs. We choose inter-agent collision avoidance costs to be of the form
\begin{align}
    L^{ij}_k(x^i_k,x^j_k) = \mathbf{1}\{d(x^i_k,x^j_k) < d_{m}\}\cdot\left(d(x^i_k,x^j_k) - d_{m}\right)^2,
\end{align}
where $\mathbf{1}\{\cdot\}$ is the indicator function, $d(x^i_k,x^j_k)$ is the Euclidean distance between the two agents, and $d_{m}$ is the maximum threshold distance between the agents after which the agents start incurring the collision avoidance cost. We choose the value of $d_m$ to be 2\emph{m}.
Since the game is a $w$-potential game, we can employ the result from Theorem \ref{thm-1}. We use the iLQR algorithm described in Section \ref{multi-agent-inter} to minimize the resulting single-agent optimal control problem associated with minimizing the potential function. 
We choose coefficients of inter-agent costs ($c^i$'s) such that Agent 1 incurs a high cost for being close to other agents. The resulting trajectories are plotted in Fig. \ref{fig:three_unicycle_potential}. Since Agent 1 has a higher coefficient of inter-agent cost compared to the other agents, it yields more to the other agents. It deviates more compared to other agents from its nominal path when moving towards its goal location. This scenario can be considered a highway scenario where two agents are more aggressive while Agent 1 moves cautiously around such aggressive drivers. This shows that our algorithm is able to handle such real-life asymmetric situations.



Furthermore, to study constrained settings, we consider four agents sitting on the corners of a square of length 3m such that their goal positions are the opposite diagonal ends of the square (see Fig.~\ref{fig:four_agents}). Each agent wants to reach its goal position while avoiding collisions with others. We consider discrete-time unicycle dynamics similar to \eqref{eq:unicycle_dynamics} to model each agent $i$.
Suppose the desired position of each agent $i$ is $x^i_f$. We consider the cost function of each agent $i$ to be \vspace{-0.17cm}
\begin{align}\label{eq:cost}
    & J^i(x,u) =  \frac{1}{2}\left(x^i_{T} - x_f^i\right)^\top Q_f^i\left(x^i_{T} - x_f^i\right) \nonumber\\ & + \sum_{k=0}^{T - 1}\frac{1}{2}\left(x^i_{k} - x^i_f\right)^\top Q^i\left(x^i_{k} - x^i_f\right) + \frac{1}{2}{u^i}_{k}^\top C^iu^i_{k},
\end{align}
where $Q^i$ and $C^i$ are diagonal matrices determining the deviation cost from the reference position and control costs, respectively. The matrix $Q_f^i$ determines the cost of deviating from the desired position at the final time instance. To enforce collision avoidance among the agents, we consider a minimum separation constraint between the agents. We require the distance between any pair of agents to be greater than or equal to some threshold distance $d_{\text{collision}}$. Therefore, we impose collision avoidance constraints of the following form between any pair of agents
\vspace{-0.15cm}
\begin{align}\label{collision_constraint}
     g_{ij}(x^i_k,x^j_k,k):= -d(x^i_k,x^j_k) + d_{\text{collision}} \leq 0,
\end{align}

\noindent where $d(x^i_k,y^i_k)$ is the distance between agents $i$ and $j$ at time step $k$. We consider the value of $d_{\text{collision}}$ to be 0.3\emph{m}. 
Furthermore, we consider additional constraints for bounding the actions of every each agent:
\begin{align}\label{bound_constraint}
    g_i(u^i_k,k) := |u^i_k| - u_{\text{bound}} \leq 0, 
\end{align}
\noindent where $u_{\text{bound}}$ is the bound on actions. We consider the bound $u_{\text{bound}}$ to be 3 m/s for the linear velocity and 3 rad/s for the angular velocity. All these constraints can be concatenated to obtain the constraint vector $g(x_k,u_k,k)$. It is straightforward to see that the game is an instance of a constrained potential dynamic game. Therefore, we apply the results from Theorem \ref{thm-1} to obtain optimal trajectories.
\begin{figure}
    \centering
    \includegraphics[scale=0.4]{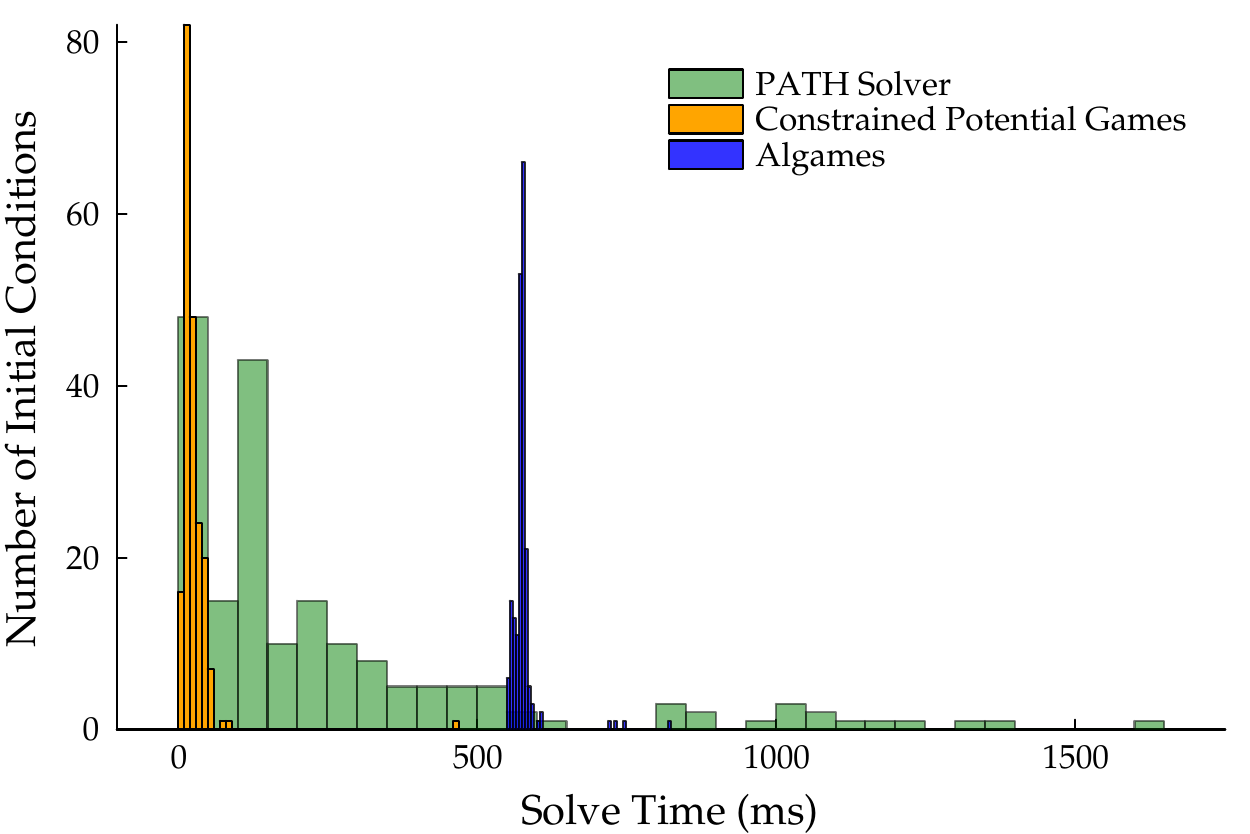}
    \caption{Histograms for the comparison of the solve time of Constrained Potential Games with Algames. Constrained Potential Games has an average solve time of 26.15 \emph{ms} in contrast to the PATH solver and Algames, which have an average solve time of 248.21 and 577.35 \emph{ms}, respectively. Our approach is approximately 10 times faster than the PATH solver and 20 times faster than Algames.}
    \label{fig:comparison}
    \vspace{-0.7cm}
\end{figure} 
The simulation of the resulting trajectories is demonstrated in~Fig.~\ref{fig:four_agents}, where the agents manage to successfully avoid collisions with one another while reaching their designated goal locations. They exhibit intuitive trajectories, yielding and coordinating their motions to avoid collisions. Similar trajectories can be generated for various initial conditions. We further compare the solve time of our algorithm with the state-of-the-art constrained game solvers such as Algames~\cite{cleac2019algames} and PATH solver~\cite{dirkse1995path}. In order to utilize the PATH solver, we use the method provided in~\cite{liu2023learning} to cast the game as a set of complementarity problems, which are then solved using the PATH solver. We generate 200 random sets of initial conditions for the four agents, and for each set of initial conditions, we run our algorithm, PATH solver, and Algames. Furthermore, we provide the same initializations for all the methods. We measure the solution time of both methods. The histograms of the solve time for the three methods are plotted in Fig.~\ref{fig:comparison}. The solve time of our method is on average $26.15 \pm 34.30$ \emph{ms} while the average solve time of Algames was $577.35 \pm 27.70$ \emph{ms}. 

While working with the PATH solver, we observed that sometimes it is difficult for the PATH solver to converge for highly symmetrical problems such as in Fig~\ref{fig:four_agents}. Because breaking symmetry becomes difficult without smart initializations when solving complementarity problems. Therefore, we observe that out of 200 initial conditions, the PATH solver converges for 189. The solve time is generally very high for the runs where the PATH solver doesn't converge. Therefore, we only consider the converged times for our solve time computations. The solve time for the PATH-solver is $248.21 \pm 302.05$ \emph{ms}.
As Fig.~\ref{fig:comparison} shows, our method is significantly faster than the PATH solver and Algames, and for most of the initial conditions, provides a solution in 20 \emph{ms} while for Algames, the solve time is always greater than 500 \emph{ms}. Furthermore, the variance in solve time for the PATH solver is really high. Our method is about 10 times faster than the PATH solver and 20 times faster than Algames.

\section{Experiments}\label{sec:experiment}
To demonstrate the performance of our proposed approach, we considered a hardware experiment where two Crazyflie quadrotors are navigating in a room shared with two humans. We require the quadrotors to transport a rigid rod with length $0.5$\emph{m} to enforce complicated task constraints. This implies that the quadrotors must maintain a fixed given distance from each other at all time steps, i.e., they must always satisfy an equality constraint. We further assume that agents are subject to collision avoidance constraints. This implies that quadrotors need to fulfill both equality and inequality constraints at each time step. The humans and quadrotors have designated initial and goal positions. The humans are asked to walk to their destinations, and the quadrotors need to carry the rod to the designated location while avoiding collisions with the humans.
The quadrotors use our trajectory optimization algorithm to move from their initial positions to goal positions. 
The state vector of each quadrotor $i \in \{1, 2\}$, at each time step $k$, is:
\begin{align*}
    x^i_k &= [p^i_{x, k}, ~p^i_{y, k}, ~p^i_{z, k}, ~\phi^i_{k}, ~\theta^i_k, ~\psi^i_k],
\end{align*}
\noindent which consists of its position ($[p^i_{x, k}, ~p^i_{y, k}, ~p^i_{z, k}]$) and orientation ($[\phi^i_{k}, ~\theta^i_k, ~\psi^i_k]$). We model each quadrotor as a 6 DOF integrator, i.e., each quadrotor dynamics are given by $\dot{x}^i_k = u^i_k$, where $u^i_k = [u^i_{1, k}, ~u^i_{2, k}, ~u^i_{3, k}, ~u^i_{4, k}, ~u^i_{5, k}, ~u^i_{6, k}]$ is the control input. The first three entries of $u^i_k$ correspond to the linear velocities, and the last three entries correspond to angular velocities. This design choice is based on the fact that we send waypoints as commands to Crazyflies through the Crazyswarm package \cite{crazyswarm}, and they track those waypoints with an in-place low-level controller.
Similar to~\cite{fridovich2020efficient}, we model the humans via unicycle dynamics, except for the fact that there is a constant height $r^i$ associated with each human $i, i \in \{1, 2\}$. We consider the human state vector to be
\begin{align*}
x^i_k = [p^i_{x, k}, ~p^i_{y, k}, ~r^i, ~\theta^i_{k}] 
\end{align*}
where $p^i_{x, k}, ~p^i_{y, k}$ denote the $x, y$ coordinates of human $i$ and $\theta^i_{k}$ denote the orientation of human $i$ (yaw), all at time step $k$.
We assume the $z$-coordinate of the human is located midway from the ground, i.e., $r^i \equiv 1$.
We consider the cost functions of quadrotors to be of the same form as in \eqref{eq:cost} where the reference trajectory for each agent is a straight line connecting their initial and goal locations. Furthermore, in the case of humans, we assume that the goal positions of humans are known through which we can identify the desired reference trajectory of the human. Based on that, we assume that the human's cost function is also the same as in \eqref{eq:cost}.\footnote{We acknowledge that the drones in our experiment may not have the perfect knowledge of the cost functions of humans. Therefore, typically we obtain the cost functions of other agents through inverse methods such as inverse reinforcement learning~\cite{russell1998learning} or inverse game theoretic methods~\cite{mehr2023maximum}.}
We assume that humans are rational, and while planning their motion in the environment, they also yield to each other and drones by satisfying constraints. We consider the collision avoidance constraint between the two humans to be the same as~\eqref{collision_constraint}. For the collision avoidance constraint between each quadrotor and each human, we assume that each human occupies a cylinder of height 2m with radius $\sqrt{0.4}\, m$, centered at $(p^i_{x, k}, ~p^i_{y, k}, ~r^i)$. 
The maximum linear velocity for the quadrotors is $1.2$m/s, and we assume that humans have a maximum linear velocity of $1.5$m/s.\\
We implement our method in a receding horizon fashion to account for the uncertainties in human actions and the potential mismatch between the humans' cost functions and the drones' assumption of the humans' cost functions to some extent. However, due to the virtue of the assumptions, our method will not work well in case the cost functions are drastically different from what the drones assume.
We set the planning horizon to be 0.5s with a step size of 0.1s. 
We solve the underlying optimization problem with do-mpc
\cite{LUCIA201751}, which models the problem symbolically with CasADi \cite{Andersson2019} and solves them with IPOPT \cite{Wachter:2006wt}.
\\ The resulting trajectories of our experiment are plotted in Fig \ref{fig:hardware_experiment}. 
As shown in the figure, when the two quadrotors are relatively far from the two humans, they move straight toward their designated positions. When humans are nearby, the quadrotors can simultaneously change their orientations and the orientation of the rod to avoid collisions with humans while carrying the rod and maintaining the equality constraints. 

\section{Conclusion}\label{sec:Conclusion}
In conclusion, this study addresses the challenges posed by planning and decision-making in multi-agent interactions. The computational complexity of reasoning about the actions and intentions of other agents, along with the diverse objectives of the agents involved, makes finding solutions a difficult task. By leveraging concepts from dynamic noncooperative game theory and constrained potential games, we proposed a novel approach to simplify the computation of OLGNE.
The key idea is to identify structures in realistic multi-agent planning scenarios that can be transformed into WCPDGs. Under specific cost structures, the problem can be formulated as a WCPDG, and the OLGNE can be obtained by solving a single constrained optimal control problem. This transformation significantly reduces the computational burden compared to solving a set of coupled constrained optimal control problems. We showcase the capabilities of our method through extensive numerical simulations and hardware experiments. We show that our method is significantly faster than state-of-the-art game solvers. 

In the future, we aim to expand the algorithm's capabilities to encompass more general cost functions, even accommodating different functional forms for inter-agent costs. We plan to extend the scope of our cost functions to include highly versatile structures, such as neural networks. By doing so, we intend to merge inverse cost learning methods with potential games, enabling the application of these ideas to real-world robotic systems.


\appendix

\subsection{Proof of Theorem \ref{thm-1}}\label{proof-thm-1}

\vspace{-1mm}\begin{proof}
This theorem was proven in \cite{zazo2016dynamic} for exact potential games with an infinite horizon when $T = \infty$. Here, we state the proof for WCPDG with finite horizons. It should be noted that as per \cite{zazo2016dynamic}, Assumption~\ref{assum-2} is required to ensure that the KKT conditions hold at the optimal point and that there exist feasible dual variables( see \cite{bertsekas1997nonlinear}, Prop. 3.3.8). With that, the Lagrangian for each agent $i \in \mathcal{N}$ can be written as
    
\begin{align}
    & \mathcal{L}(x_k,u_k,\lambda^i,\mu^i) = L^i(x_{T}) + {\mu^{i^\top}_{T}} g(x_{T}) + \sum_{k=0}^{T-1}\bigg( L^i_k(x_k,u_k) \bigg. \nonumber \\
    & \qquad \bigg. + {\lambda^{i^\top}_k} \left( f(x_k,u_k) - x_{k+1} \right) + {\mu^{i^\top}_k} g(x_k,u_k)\bigg),\end{align}
where $\lambda^i = \{\lambda^i_k\}_{k \in \{0,1,\ldots,T\}}$ and $\mu^i = \{\mu^i_k\}_{k \in \{0,1,\ldots,T\}}$ are the vector of Lagrange multiplies for agent $i$. For every agent $i\in\mathcal{N}$, the KKT conditions for the game $\mathcal{G}$ can be written as:
\begingroup
\allowdisplaybreaks
\begin{align}
    &\frac{\partial L^i_k(x_k,u_k)}{\partial x^i_k} + {\lambda^{i^\top}_k}\frac{\partial f(x_k,u_k)}{\partial x^i_k} + {\mu^{i^\top}_k}\frac{\partial g(x_k,u_k)}{\partial x^i_k} - \lambda^{i}_{k-1} = 0, \\
    & \frac{\partial L^i_k(x_k,u_k)}{\partial u^{i}_k} + {\lambda^{i^\top}_k}\frac{\partial f(x_k,u_k)}{\partial u^i_k} + {\mu^{i^\top}_k}\frac{\partial g(x_k,u_k,k)}{\partial u^i_k} = 0, \\
    & x_{k+1} = f(x_k,u_k), g(x_k,x_k) \leq 0 \; \forall k\in\{0,1,\ldots,T-1\};
    \\
    & \mu^i_k \leq 0, \;\; {\mu^{i^\top}_{k}} g(x_k,u_k) = 0, \; \forall \; 0\leq k \leq T-1. && \\
    & \frac{\partial L^i_T(x_{T})}{\partial x^i_{T}} + {\mu^{i^\top}_{T}}\frac{\partial g(x_{T})}{\partial x^i_{T}} = 0, && \\
    & g(x_{T}) \leq 0, \mu^i_T \leq 0, \;\; {\mu^{i^\top}_{T}} g(x_T) = 0. &&
\end{align}
\endgroup

\noindent To compute the KKT conditions for the multivariate constrained optimal control, we write the Lagrangian of \eqref{mopc} as
\begin{align}
    & \mathcal{L}^P(x_k,u_k,\xi_k) = L_T(x_{T})  + {\delta^\top_{T}} g(x_{T}) + \sum_{k=0}^{T-1}\bigg( L_k(x_k,u_k) \bigg. \nonumber \\
    & \qquad \bigg. + {\xi_k}^\top \left( f(x_k,u_k) - x_{k+1} \right) + {\delta^\top_k} g(x_k,u_k)\bigg)
\end{align}

\noindent where $\xi_k$ is the vector of Lagrange multipliers at time-step $k$. Therefore, its KKT conditions can be written as


\begin{align}
    & \frac{\partial L_k(x_k,u_k)}{\partial x^i_k} + {\xi^\top_k}\frac{\partial f(x_k,u_k)}{\partial x^i_k} + {\delta^\top_k}\frac{\partial g(x_k,u_k)}{\partial x^i_k} - \xi_{k-1} = 0, \\
    & \frac{\partial L_k(x_k,u_k)}{\partial u^{i}_k} + {\xi^\top_k}\frac{\partial f(x_k,u_k)}{\partial u^i_k} + {\delta^\top_k}\frac{\partial g(x_k,u_k,k)}{\partial u^i_k} = 0, \\
    & x_{k+1} = f(x_k,u_k,k), g(x_k,x_k) \leq 0 \; \forall k \in \{0,1,\ldots,T-1\} \\
    & \delta_k \leq 0, \;\; {\delta^\top_k} g(x_k,u_k) = 0, \; \forall \; 0\leq k \leq T-1. && \\
    & \frac{\partial L_T(x_{T})}{\partial x^i_{T}} +  {\delta^\top_{T}}\frac{\partial g(x_{T})}{\partial x^i_{T}}= 0 && \\
    & g(x_{T}) \leq 0, \delta_T \leq 0, \;\; {\delta^\top_T} g(x_T) = 0. &&
\end{align}
For each time step, we can multiply the KKT optimality conditions by a fixed positive number ($w^i_k$) on both sides and still the optimality conditions will remain equivalent to the original conditions. Therefore, the equivalent version of optimality conditions for \eqref{mopc} will be as follows:
\begin{align}
    & w^i_k\left(\frac{\partial L_k(x_k,u_k)}{\partial x^i_k} + {\xi^\top_k}\frac{\partial f(x_k,u_k)}{\partial x^i_k} \right. \\ \nonumber \\ & \qquad \qquad \qquad \left. + \; {\delta^\top_k}\frac{\partial g(x_k,u_k)}{\partial x^i_k} - \xi_{k-1}\right) = 0, \\
    & w^i_k\left(\frac{\partial L_k(x_k,u_k)}{\partial u^{i}_k} + {\xi^\top_k}\frac{\partial f(x_k,u_k)}{\partial u^i_k} + {\delta^\top_k}\frac{\partial g(x_k,u_k,k)}{\partial u^i_k}\right) = 0, \\
    & x_{k+1} = f(x_k,u_k,k), g(x_k,x_k) \leq 0 \; \forall k \in \{0,1,\ldots,T-1\} \\
    & \delta_k \leq 0, \;\; {\delta^\top_k} g(x_k,u_k) = 0, \; \forall \; 0\leq k \leq T-1. && \\
    & w^i_T\left(\frac{\partial L_T(x_{T})}{\partial x^i_{T}} +  {\delta^\top_{T}}\frac{\partial g(x_{T})}{\partial x^i_{T}}\right)= 0 && \\
    & g(x_{T}) \leq 0, \delta_T \leq 0, \;\; {\delta^\top_T} g(x_T) = 0. &&
\end{align}
Thus, in order for multivariate optimal control problem \eqref{mopc} and game $\mathcal{G}$ to have same optimality conditions the following must hold:
\begin{align}
\frac{\partial L^i_k(x^i_k,u_k)}{\partial x^i_k} &= w^i_k\left(\frac{\partial L_k(x_k,u_k)}{\partial x^i_k}\right), \label{eq:apend-1}\\
\frac{\partial L^i_k(x^i_k,u_k)}{\partial u_k^i} &= w^i_k\left(\frac{\partial L_k(x_k,u_k)}{\partial u_k^i}\right), \\
\forall i \in \mathcal{N}, \; k & = 0,1,\ldots,T-1; \; \text{and} \nonumber \\
\frac{\partial L^i_T(x^i_{T})}{\partial x^i_T} &= w^i_T\left(\frac{\partial L_T(x_{T},T)}{\partial x^i_T}\right), \label{eq:apend-2}\\
\lambda^i_k = w^i_k&\xi_k,\; \mu^i_k = \delta_k \; \forall i \in \mathcal{N}.
\end{align}
When conditions \eqref{eq:apend-1}-\eqref{eq:apend-2} are satisfied, Lemma-\ref{lemma-1} holds true. Since, Lemma-\ref{lemma-1} is necessary and sufficient condition for a game $\mathcal{G}:= \left( \mathcal{N}, \{\Gamma_i\}_{i\in\mathcal{N}}, \{J_i\}_{i\in\mathcal{N}},\{\mathcal{C}_k\}_{k\in\{0,\ldots, T\}},f,x_0 \right)$ to be a WCPDG, a solution to the \eqref{mopc} will be an open-loop Nash equilibria of $\mathcal{G}$. Readers are referred to proof of Theorem-1 in \cite{zazo2016dynamic} for further details.
\end{proof}


\subsection{Exempt Determination Involving Human Subjects}\label{sec: exempt}
This experiment was conducted while all the authors were working at the University of Illinois Urbana-Champaign(UIUC). The University of Illinois at Urbana-Champaign Office for the Protection of Research Subjects (OPRS) authorized the authors to use human subjects and determined that the criteria for exemption had been met. The protocol number for the same is 22107.

\bibliography{TRO}
\bibliographystyle{IEEEtran}

\begin{IEEEbiography}
[{\includegraphics[width=1in,height=1.25in,clip,keepaspectratio]{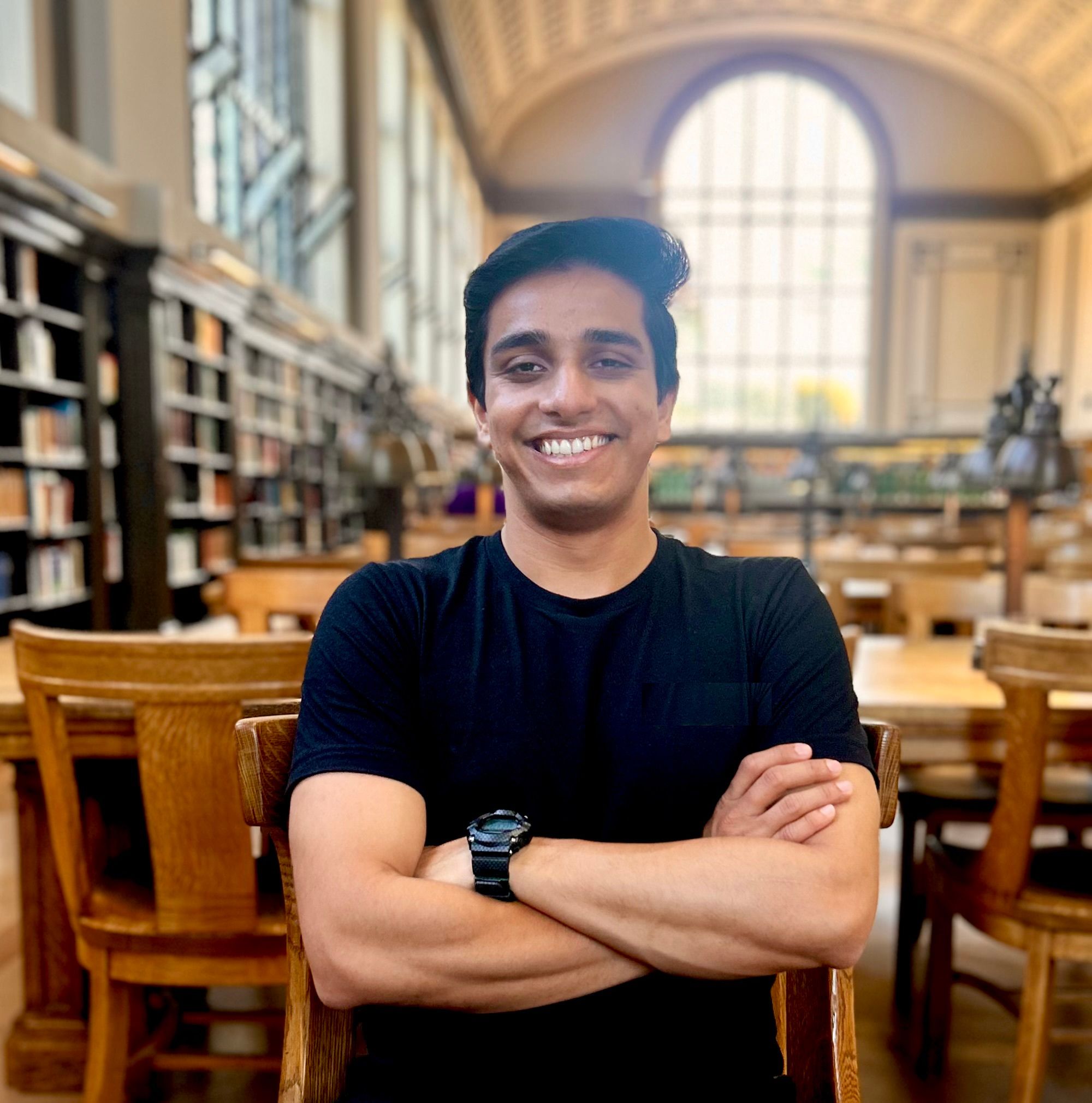}}]
{Maulik Bhatt} is a Ph.D. student in the Department of Mechanical Engineering at UC Berkeley. Before joining Berkeley, he was a Ph.D. student in the Department of Aerospace Engineering at the University of Illinois Urbana-Champaign. He received his Interdisciplinary Dual Degree (Bachelor's in Aerospace Engineering + Master's in Systems and Control Engineering) from the Indian Institute of Technology Bombay in 2021. He is interested in leveraging game theory, stochastic control, and machine learning to enable efficient robotic multi-agent interactions and safe motion planning. During his undergraduate studies, he worked in the area of state estimation using geometric control and variational principles.
\end{IEEEbiography}
\begin{IEEEbiography}
[{\includegraphics[width=1in,height=1.25in,clip,keepaspectratio]{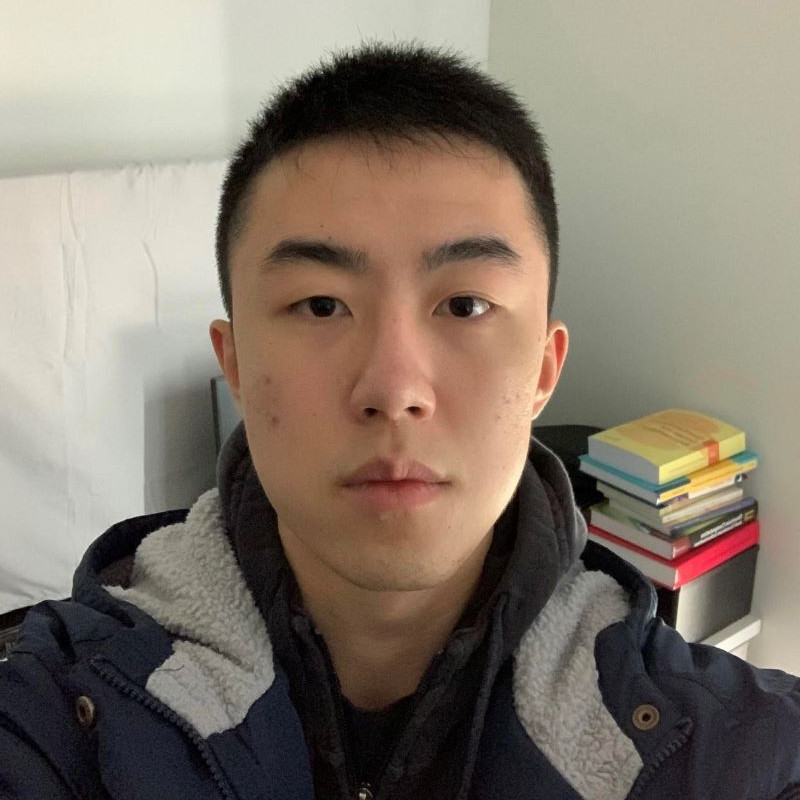}}]
{Yixuan Jia} received B.S. degrees in computer engineering and mathematics from the University of Illinois Urbana-Champaign, Champaign, IL, USA, in 2023. During his undergraduate studies, he studied multi-agent planning, safety verification of autonomous systems, and control theory.
He is currently working toward the M.S. degree in Aeronautics and Astronautics at the Massachusetts Institute of Technology.
His current research interests include perception, localization, and planning for autonomous navigation in unstructured environments. 
\end{IEEEbiography}
\begin{IEEEbiography}[{\includegraphics[width=1in,height=1.25in,clip,keepaspectratio]{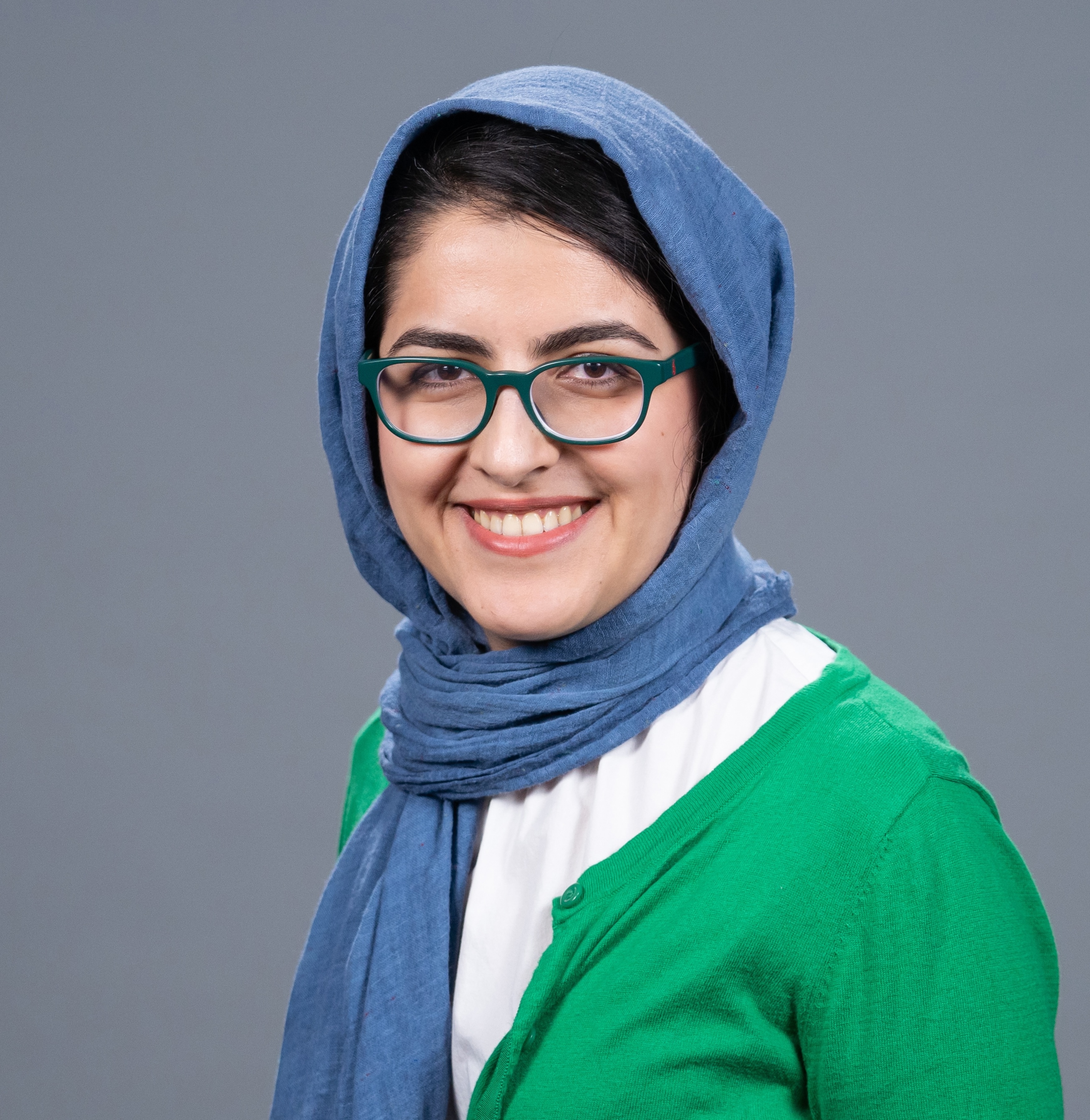}}]
    {Negar Mehr} (Member, IEEE) received a B.S. degree from the Sharif University of Technology and a Ph.D. degree from UC Berkeley, Berkeley, CA, USA, in 2013 and 2019, respectively, both in mechanical engineering. From 2019 to 2020, she was a Postdoctoral Scholar with Stanford Aeronautics and Astronautics Department. She is an Assistant Professor with the Mechanical Engineering Department at the University of California. Berkeley. Before joining Berkeley, she was an assistant professor in Aerospace Engineering at the University of Illinois Urbana-Champaign. Her research interest includes learning and control algorithms for interactive robots, robots that can safely and intelligently interact with other agents. Dr. Mehr is the recipient of the NSF CAREER award in 2022. Her Ph.D. dissertation was awarded the 2020 IEEE ITSS Best Ph.D. Dissertation Award.
\end{IEEEbiography}

\end{document}